\newtheorem{theorem}{Theorem}
\newtheorem{corollary}{Corollary}[theorem]
\newtheorem{assumption}{Assumption}
\newtheorem{remark}{Remark}[theorem]
\newtheorem{proposition}{Proposition}
\title{Optimal Gradient Quantization Condition for Communication-Efficient Distributed Training}
\author{
  An Xu, Zhouyuan Huo, Heng Huang \\
  Department of Electrical and Computer Engineering\\
  University of Pittsburgh\\
  \texttt{\{an.xu, zhouyuan.huo, heng.huang\}@pitt.edu} \\
}
\begin{document}
\maketitle

\begin{abstract}
The communication of gradients is costly for training deep neural networks with multiple devices in computer vision applications. In particular, the growing size of deep learning models leads to higher communication overheads that defy the ideal linear training speedup regarding the number of devices. Gradient quantization is one of the common methods to reduce communication costs. However, it can lead to quantization error in the training and result in model performance degradation. In this work, we deduce the optimal condition of both the binary and multi-level gradient quantization for \textbf{ANY} gradient distribution. Based on the optimal condition, we develop two novel quantization schemes: biased BinGrad and unbiased ORQ for binary and multi-level gradient quantization respectively, which dynamically determine the optimal quantization levels. Extensive experimental results on CIFAR and ImageNet datasets with several popular convolutional neural networks show the superiority of our proposed methods.
\end{abstract}

\keywords{Gradient Compression \and Distributed Training \and Neural Networks}

\section{Introduction}

The deep neural networks (DNNs) \cite{krizhevsky2012imagenet,lecun2015deep,szegedy2015going,simonyan2014very} have been successful in resolving many computer vision problems \cite{sermanet2013overfeat,zeiler2014visualizing,he2016deep,huang2017densely,he2016identity}. However, the training process of the large DNN is resource-consuming. Distributed computing is popular for training large DNNs, where the computing nodes need to communicate the gradient of the deep model with each other. The communication overheads are high when the size of parameters becomes large as shown in Table \ref{number of parameters}. In edge computing with federated learning \cite{konevcny2016federated} where edge devices (e.g. mobile phones) send the gradients to the server, the uplink and downlink bandwidth becomes more limited and the reduction of communication is critical. Moreover, the memory consumption can be another concern in edge devices. To address these challenges, gradient quantization directly reduces the size of the gradient compared with 32-bit full precision (FP) floating-point. The most aggressive way is to quantize a gradient value to 1 bit and achieve a compression ratio of 32. TernGrad \cite{wen2017terngrad} and QSGD \cite{alistarh2017qsgd} are two commonly used baseline works to reduce the gradient size. Ternarized gradient uses 3 levels (1.58 bits) as $\{-1,0,+1\}$ to represent the gradient, while QSGD uses 1 bit to represent the sign of the gradient and $\geq 1$ bit to represent the absolute value of the gradient. However, existing works including TernGrad and QSGD either do not take advantage of the statistical structure of the gradient or are usually limited to the assumption of Gaussian or Gaussian-like gradient distribution. Generally speaking, aggressive quantization for a lower communication budget can lead to quantization error in the training and be detrimental to the performance of the model.

In this paper, to improve the performance with quantized gradients, we minimize the variance resulting from quantization via utilizing the gradient distribution information to dynamically select the best quantization levels. Both 2-level (1-bit) and multi-level gradient precision are considered. Our contributions are summarized as follows:

\begin{itemize}
    \item To the best of our knowledge, this is the first work to deduce the optimal condition of 2-level gradient quantization and multi-level gradient random rounding quantization for \textit{any} gradient distribution.
    \item Based on the optimal quantization condition, we propose two novel and practical quantization schemes, binarized gradient (BinGrad-pb/b) and optimized random quantization (ORQ), which dynamically select the optimal quantization levels at \textit{runtime}.
    \item Extensive experimental results on the CIFAR and ImageNet datasets show that our proposed methods achieve better model performance than the counterparts.
\end{itemize}

\begin{table}[t]
\caption{\#Parameter of some commonly used deep models on ImageNet. Comm Time refers to the time to transmit one floating point gradient with 10 Gbps bandwidth.}
\centering
\setlength{\tabcolsep}{4pt}
\begin{tabular}{ccccccc}
\toprule
Model & AlexNet\cite{krizhevsky2012imagenet} & VGG-19 \cite{simonyan2014very} & DenseNet-161 \cite{huang2017densely} & GoogLeNet \cite{szegedy2015going} & ResNet-50 \cite{he2016deep} \\
\midrule
\#Parameter & 61.1 M & 143.7 M & 28.7 M & 13.0 M & 25.6 M\\
Comm Time & 195 ms & 460 ms & 92 ms & 44 ms & 82 ms\\
\bottomrule
\end{tabular}
\label{number of parameters}
\end{table}

\section{Related Works}

Quantization has become a common technique to improve the efficiency of deep learning models. It is most practical for applications with limited hardware resources. We roughly classify it into two categories based on whether the main purpose is to compress the model size for memory efficiency or to reduce the computation and communication complexity for training efficiency.

\paragraph{Weight/activation quantization} The first category is to quantize the model's weights and activations. The activations usually account for a large amount of memory usage on commodity GPUs. Moreover, the arithmetic operations of the floating-point can be replaced with bit-wise operations using quantized weights and activations, which improves the computation efficiency dramatically. Binarized Neural Networks (BNN) \cite{hubara2016binarized} quantizes both the weights and activations to $\{-1, +1\}$ . A straight-through estimator is used to propagate the error gradient through discretization and discard those gradients with large values. But it is only tested on small datasets with small models. \cite{rastegari2016xnor} proposes Binary-Weight-Networks where the weights are quantized to binary values by optimizing the MSE with a scaling factor, and XNOR-Networks by approximating the binary dot product and binary convolution. It scales to ImageNet experiments, although the degradation of top-1 test accuracy up to 12.2\% is witnessed on AlexNet. Some follow-up works include Ternary weight networks \cite{li2016ternary} which constrains the weights to $\{-1, 0, +1\}$, and DoReFa-Net \cite{zhou2016dorefa} which explores the combination of low bitwidth quantization of weights and activations/gradients during the forward/backward pass. To alleviate the performance degradation, \cite{zhang2018lq,jung2019learning} propose to learn the quantizers, \cite{lin2017towards} approximates the full-precision gradient value using a linear combination of binary bases, and \cite{cai2017deep,banner2018post} optimize the quantization levels based on the Gaussian/Laplace distribution assumption.

\paragraph{Gradient quantization} Gradient quantization can be applied during the runtime of the backward pass to avoid expensive arithmetic operations. It also improve memory efficiency and is beneficial in the single machine environment as shown in \cite{rastegari2016xnor, zhou2016dorefa}. Moreover, it is also applied in the distributed environment to reduce the communication cost for distributed training. QSGD splits the whole gradient into buckets of the same length $d$ and independently quantizes each bucket of the gradient. TernGrad compresses the gradient precision with layer-wise ternarizing and employs gradient clipping to reduce the quantization range by removing outlier values. Both of them use random rounding quantization to preserve the unbiased property of the gradient. This is required by optimization theories and critical for distributed training. SignSGD \cite{bernstein2018signsgd}, on the other hand, proposes to use the biased majority vote of the gradient sign in each worker to update the server's parameters under the parameter-server communication architecture. There are some other works such as loss-aware quantization methods \cite{hou2016loss,hou2018loss}, optimizing quantization levels based on Gaussian distribution assumption \cite{he2019optquant}, and theoretical explanation \cite{hou2018analysis} of the effect of gradient clipping as proposed in TernGrad. Besides efforts on more accurate training with low precision gradient, another parallel line of works \cite{lin2017deep,wu2018error,karimireddy2019error} proposes to use the error feedback, where workers locally accumulate the gradient error and then compensate it into the forthcoming batches of training.

We would like to give a remark that some of the works mentioned above can be incorporated along with our methods as reinforcement techniques. For example, we can incorporate the quantized gradient with the gradient sparsification \cite{wangni2018gradient,stich2018sparsified,alistarh2018convergence,basu2019qsparse} technique, where the communication cost is reduced by increasing the sparsity of the gradient to transmit. Consequently in this paper, we are focused on how to optimize the quantization scheme without the interference of other compensational methods.

\section{Optimal Gradient Quantization Condition}

In deep learning, we minimize the non-convex objective function $f(\textbf{x})$ over $N$ data samples, where $\textbf{x}$ is the model parameters to optimize. The objective function can be represented by the sum of the loss function $f_i(\textbf{x})$ regarding the training data $i$:
\begin{equation}
    \min_{\textbf{x}} f(\textbf{x})=\sum^{N}_{i=1} f_i(\textbf{x}).
\end{equation}

Stochastic gradient descent (SGD) randomly samples a data instance $i_t$ at iteration $t$, and does a gradient descent step as in Eq.~(\ref{sgd}) with stochastic gradient $\nabla f_i(\textbf{x})$ instead of full gradient $\nabla f(\textbf{x})$:
\begin{equation} \label{sgd}
\textbf{x}_{t+1}=\textbf{x}_{t}-\gamma_t \mathcal{G}(\textbf{x}_t),
\end{equation}
where $\mathcal{G}(x_t)=\nabla f_{i_t}(\textbf{x}_t)$. 

Let $\mathcal{Q}: \mathbb{R}^d \to \Phi^d$ be the quantization scheme, where $d$ is the number of parameters, $\Phi$ is a subset of $\mathbb{R}$ and also a small finite set. $\mathcal{Q}(\mathcal{G})$ denotes the quantization gradient of the FP gradient $\mathcal{G}$. For simplicity we denote $\mathcal{Q}(v)$ as the quantized value of element $v\in\mathcal{G}$. We start with some basic assumptions in non-convex optimization and reach a relatively direct proposition to guide our efforts by several steps of the proof.

\begin{assumption}
(\textbf{Unbiased gradient}) Both the stochastic gradient $\mathcal{G}(\textbf{x})$ and the quantized stochastic gradient $\mathcal{Q}(\mathcal{G}(\textbf{x}))$ are unbiased estimators of the full gradient:
\begin{equation}
\begin{split}
    \mathbb{E}\mathcal{G}(\textbf{x})=\nabla f(\textbf{x}),\quad\mathbb{E}\mathcal{Q}(\mathcal{G}(\textbf{x}))=\nabla f(\textbf{x}).\\
\end{split}
\end{equation}
\end{assumption}
An ideal design of the quantization scheme $\mathcal{Q}$ should preserve the unbiased property: $\mathbb{E}\mathcal{Q}(\mathcal{G})=\mathcal{G}$.

\begin{assumption}
(\textbf{Bounded variance}) The variance of the FP stochastic gradient is bouned by
\begin{equation}
    \mathbb{E}\left\|\mathcal{G}(\textbf{x})-\nabla f(\textbf{x})\right\|^2\leq \sigma^2.
\end{equation}
\end{assumption}

\begin{assumption}
(\textbf{Lipschitz continuous gradient}) Suppose the objective function $f(\textbf{x})$ is differentialble and have an $L$-Lipschitz continous gradient for some $L>0$:
\begin{equation}\label{lipschitz continous gradient}
    \left\|\nabla f(\textbf{x}_1)-\nabla f(\textbf{x}_2)\right\| \leq L\left\|\textbf{x}_1-\textbf{x}_2\right\|, \forall \textbf{x}_1, \textbf{x}_2 \in \mathbb{R}^d.
\end{equation}
\end{assumption}

\begin{proposition} \label{optimal goal}
The optimal unbiased gradient quantization is to minimize the expected mean square error between quantized gradient and full-precision gradient: $\mathbb{E}\left\|\mathcal{Q}(\mathcal{G}(\textbf{x}))-\mathcal{G}(\textbf{x})\right\|^2$.
\end{proposition}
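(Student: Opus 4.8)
The plan is to run the standard non-convex convergence analysis of SGD, but with the quantized gradient $\mathcal{Q}(\mathcal{G}(\textbf{x}_t))$ in place of $\mathcal{G}(\textbf{x}_t)$, and to track exactly where the quantization scheme enters the resulting bound. First I would invoke the descent lemma implied by Assumption~3 (Lipschitz gradient): for the update $\textbf{x}_{t+1}=\textbf{x}_t-\gamma_t\mathcal{Q}(\mathcal{G}(\textbf{x}_t))$,
\begin{equation}
f(\textbf{x}_{t+1})\leq f(\textbf{x}_t)-\gamma_t\langle\nabla f(\textbf{x}_t),\mathcal{Q}(\mathcal{G}(\textbf{x}_t))\rangle+\frac{L\gamma_t^2}{2}\left\|\mathcal{Q}(\mathcal{G}(\textbf{x}_t))\right\|^2.
\end{equation}
Taking the expectation conditioned on $\textbf{x}_t$ and using $\mathbb{E}\mathcal{Q}(\mathcal{G}(\textbf{x}_t))=\nabla f(\textbf{x}_t)$ from Assumption~1, the cross term collapses to $-\gamma_t\|\nabla f(\textbf{x}_t)\|^2$, so the only remaining quantization-dependent quantity is the second moment $\mathbb{E}\|\mathcal{Q}(\mathcal{G}(\textbf{x}_t))\|^2$.

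Second, I would decompose that second moment to isolate the quantization error. Using unbiasedness again, $\mathbb{E}\|\mathcal{Q}(\mathcal{G}(\textbf{x}_t))\|^2=\|\nabla f(\textbf{x}_t)\|^2+\mathbb{E}\|\mathcal{Q}(\mathcal{G}(\textbf{x}_t))-\nabla f(\textbf{x}_t)\|^2$; then conditioning further on $\mathcal{G}(\textbf{x}_t)$ and using the ideal-design property $\mathbb{E}[\mathcal{Q}(\mathcal{G})\mid\mathcal{G}]=\mathcal{G}$ noted after Assumption~1, the cross term vanishes and with Assumption~2,
\begin{equation}
\mathbb{E}\left\|\mathcal{Q}(\mathcal{G}(\textbf{x}_t))-\nabla f(\textbf{x}_t)\right\|^2=\mathbb{E}\left\|\mathcal{Q}(\mathcal{G}(\textbf{x}_t))-\mathcal{G}(\textbf{x}_t)\right\|^2+\mathbb{E}\left\|\mathcal{G}(\textbf{x}_t)-\nabla f(\textbf{x}_t)\right\|^2\leq\mathbb{E}\left\|\mathcal{Q}(\mathcal{G}(\textbf{x}_t))-\mathcal{G}(\textbf{x}_t)\right\|^2+\sigma^2.
\end{equation}
Substituting back, telescoping over $t=0,\dots,T-1$, and rearranging in the usual way produces a bound of the form $\frac{1}{T}\sum_t\mathbb{E}\|\nabla f(\textbf{x}_t)\|^2\leq\frac{C_1}{T}+C_2\sigma^2+\frac{C_3}{T}\sum_t\mathbb{E}\|\mathcal{Q}(\mathcal{G}(\textbf{x}_t))-\mathcal{G}(\textbf{x}_t)\|^2$, with constants depending only on $L$, the step-size schedule, and $f(\textbf{x}_0)-\inf f$. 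Since the gradient distribution at each iteration is not under the designer's control, the sole lever for tightening this guarantee is the last term, which is governed by minimizing $\mathbb{E}\|\mathcal{Q}(\mathcal{G}(\textbf{x}))-\mathcal{G}(\textbf{x})\|^2$ — the expected MSE between the quantized and FP gradients — giving the proposition. Because the scheme may be chosen independently at each iteration (indeed per coordinate), minimizing the trajectory sum reduces to minimizing each summand, which is exactly what the later sections target.

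The arithmetic above is routine; the delicate points are twofold. First, one must be careful with the tower property: $\mathcal{Q}$ is random given $\mathcal{G}$, and $\mathcal{G}$ is random given $\textbf{x}_t$, so the two unbiasedness statements in Assumption~1 have to be applied at the correct level of conditioning (first peel off $\mathcal{Q}\mid\mathcal{G}$, then $\mathcal{G}\mid\textbf{x}_t$), otherwise the cross terms do not actually vanish. Second — and this is the main conceptual obstacle rather than a technical one — the proposition is really the assertion that the MSE term is the \emph{unique} quantization-dependent term in the standard SGD convergence bound; so "optimal" must be read as "bound-optimal" (minimizing the controllable part of the upper bound on $\frac{1}{T}\sum_t\mathbb{E}\|\nabla f(\textbf{x}_t)\|^2$), and the write-up should state that interpretation explicitly so the claim is not mistaken for a literal minimization of the objective's stationarity gap.
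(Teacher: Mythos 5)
Your proposal is correct and follows essentially the same route as the paper: descent lemma under Assumption~3, unbiasedness to collapse the cross term, and a bias--variance decomposition of $\mathbb{E}\|\mathcal{Q}(\mathcal{G}(\textbf{x}_t))\|^2$ that isolates $\mathbb{E}\|\mathcal{Q}(\mathcal{G}(\textbf{x}_t))-\mathcal{G}(\textbf{x}_t)\|^2$ as the only designer-controllable term (the paper peels off $\mathcal{Q}-\mathcal{G}$ first and then $\mathcal{G}-\nabla f$, whereas you split in the opposite order, but the resulting one-step bound is identical). Your additional telescoping step and the explicit remark that ``optimal'' means bound-optimal go slightly beyond what the paper writes down, but they are consistent refinements rather than a different argument.
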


\begin{proof}
At iteration $t$, $f(\textbf{x}_{t+1})-f(\textbf{x}_t)\leq \left\langle\nabla f(\textbf{x}_t), \textbf{x}_{t+1}-\textbf{x}_t\right\rangle + \frac{L}{2}\left\|\textbf{x}_{t+1}-\textbf{x}_t\right\|^2.$
\begin{equation}
\begin{split}
    &\mathbb{E}f(\textbf{x}_{t+1})-f(\textbf{x}_t)\leq -\gamma_t\left\langle \nabla f(\textbf{x}_t), \mathbb{E}\mathcal{Q}(\mathcal{G}(\textbf{x}_t))\right\rangle + \frac{L\gamma_t^2}{2}\mathbb{E}\left\|\mathcal{Q}(\mathcal{G}(\textbf{x}_t))\right\|^2\\
    & =-\gamma_t\|\nabla f(\textbf{x}_t)\|^2 + \frac{L\gamma_t^2}{2}\mathbb{E}(\|\mathcal{Q}(\mathcal{G}(\textbf{x}_t))-\mathcal{G}(\textbf{x}_t)\|^2+\|\mathcal{G}(\textbf{x}_t)\|^2)\\
    & \leq (\frac{L\gamma_t^2}{2}-\gamma_t)\|\nabla f(\textbf{x}_t)\|^2 + \frac{L\gamma_t^2}{2}(\mathbb{E}\|\mathcal{Q}(\mathcal{G}(\textbf{x}_t))-\mathcal{G}(\textbf{x}_t)\|^2+\sigma^2).\\
\end{split}
\end{equation}
\end{proof}

\subsection{Multi-level Quantization} \label{orq}

For multi-level quantization with $s$ quantization levels $\{b_k\}^{k=\frac{s-1}{2}}_{k=-\frac{s-1}{2}}$, we directly utilize the \textit{random rounding} to keep the unbiased property:
\begin{equation}
\mathcal{Q}(v)=
\begin{cases}
    b_{k-1} \text{ with prob } \frac{b_k-v}{b_k-b_{k-1}},\\
    b_{k} \text{ with prob } \frac{v-b_{k-1}}{b_k-b_{k-1}}.\\
\end{cases}
\end{equation}
where $\mathcal{Q}(v)$ is the quantized value of element $v$ in FP gradient $\mathcal{G}$. In TernGrad the number of quantization levels $s=3$.

However, in both QSGD and TernGrad, $\{b_k\}^{k=\frac{s-1}{2}}_{k=-\frac{s-1}{2}}$ are evenly spaced from $-\|\mathcal{G}\|$ to $\|\mathcal{G}\|$. Constant quantization interval favors gradient with uniform distribution, which is not as observed in experiments \cite{wen2017terngrad}. We may assume the gradient distribution to be Gaussian or Gaussian-like to help the quantization level selection, but still it is not a perfect match. Besides, gradient in different layers usually follows different distribution. Assuming them to be the same is inappropriate. To get the optimal quantization levels for any gradient distribution, we propose the following theorem to guide the optimal levels selection at runtime.

\begin{theorem}
(\textbf{Optimal unbiased quantization}) Suppose $p(v)$ is the distribution of the element $v$ in the FP stochastic gradient $\mathcal{G}$, then the optimal random rounding quantization levels $\{b_k\}$ satisfies,
\begin{equation}
b_{k-1}\int^{b_k}_{b_{k-1}}p(v)dv+b_{k+1}\int^{b_{k+1}}_{b_k}p(v)dv=\int^{b_{k+1}}_{b_{k-1}}vp(v)dv.
\end{equation}
\end{theorem}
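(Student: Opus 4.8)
The plan is to combine Proposition~\ref{optimal goal} with an explicit evaluation of the random rounding variance and then impose a first-order optimality condition on the levels. By Proposition~\ref{optimal goal}, the optimal unbiased scheme minimizes $\mathbb{E}\left\|\mathcal{Q}(\mathcal{G})-\mathcal{G}\right\|^2$. Since the random rounding is applied independently coordinate by coordinate, this quantity decomposes as a sum over coordinates of $\mathbb{E}_v\,\mathbb{E}\!\left[(\mathcal{Q}(v)-v)^2\mid v\right]$, so it suffices to minimize $\mathbb{E}_v\,\mathbb{E}\!\left[(\mathcal{Q}(v)-v)^2\mid v\right]$ over the choice of the levels $\{b_k\}$.

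The first step is to compute the conditional variance. For $v\in[b_{k-1},b_k]$, write $a=v-b_{k-1}\geq 0$ and $c=b_k-v\geq 0$, so that $a+c=b_k-b_{k-1}$. The definition of $\mathcal{Q}$ then gives $\mathbb{E}\!\left[(\mathcal{Q}(v)-v)^2\mid v\right]=\frac{c}{a+c}\,a^2+\frac{a}{a+c}\,c^2=ac=(v-b_{k-1})(b_k-v)$. Hence the objective to minimize is $F(\{b_k\})=\sum_k\int_{b_{k-1}}^{b_k}(v-b_{k-1})(b_k-v)\,p(v)\,dv$, which is a smooth function of the interior levels, with each $b_k$ entering only the two consecutive summands indexed by $k$ and $k+1$.

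Next I would set $\partial F/\partial b_k=0$ for each interior level. Differentiating the $k$-th summand $\int_{b_{k-1}}^{b_k}(v-b_{k-1})(b_k-v)p(v)dv$ and the $(k{+}1)$-th summand $\int_{b_k}^{b_{k+1}}(v-b_k)(b_{k+1}-v)p(v)dv$ with respect to $b_k$ via the Leibniz rule, the moving-endpoint boundary contributions vanish because each integrand is zero at $v=b_k$, and one is left with $\partial F/\partial b_k=\int_{b_{k-1}}^{b_k}(v-b_{k-1})p(v)dv-\int_{b_k}^{b_{k+1}}(b_{k+1}-v)p(v)dv$. Setting this to zero and separating the terms that are linear in the $b$'s from the first moments of $p$ yields precisely $b_{k-1}\int_{b_{k-1}}^{b_k}p(v)dv+b_{k+1}\int_{b_k}^{b_{k+1}}p(v)dv=\int_{b_{k-1}}^{b_{k+1}}vp(v)dv$, the claimed condition.

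The part I expect to require the most care is justifying the differentiation under the integral sign with variable limits and verifying that the endpoint terms genuinely cancel; this is exactly where the special algebraic form $(v-b_{k-1})(b_k-v)$, which vanishes at both $b_{k-1}$ and $b_k$, is essential. A secondary point worth flagging is that the derived equation is only a necessary stationarity condition: to conclude it characterizes a true minimizer one should additionally note that $F$ attains its infimum on the compact set of admissible ordered level configurations (with the outermost levels fixed, e.g.\ to $\pm\|\mathcal{G}\|$), so that an optimal configuration exists and must satisfy the stated relation.
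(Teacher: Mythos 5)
Your proposal is correct and follows essentially the same route as the paper's own proof: both express the expected squared error of random rounding as $\sum_k\int_{b_{k-1}}^{b_k}(v-b_{k-1})(b_k-v)p(v)\,dv$ and obtain the stated condition by setting $\partial/\partial b_k$ of the two affected summands to zero. Your version merely adds detail the paper leaves implicit (the derivation of the conditional variance $ac=(v-b_{k-1})(b_k-v)$, the vanishing of the Leibniz boundary terms, and the existence of a minimizer), all of which checks out.
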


\begin{proof}
As suggested in Proposition \ref{optimal goal}, we minimize
\begin{equation}
\begin{split}
\min_{\{b_k\}} D = \mathbb{E}(v-Q(v))^2=\sum_{k}\int^{b_k}_{b_{k-1}}(v-b_{k-1})(b_k-v)p(v)dv.
\end{split}
\end{equation}

Let the gradient regarding $\{b_k\}$ equal to zero,

\begin{equation}
\begin{split}
    \frac{\partial D}{\partial b_k} &= \frac{\partial}{\partial b_k}\int^{b_k}_{b_{k-1}}(v-b_{k-1})(b_k-v)p(v)dv + \frac{\partial}{\partial b_k}\int^{b_{k+1}}_{b_k}(v-b_k)(b_{k+1}-v)p(v)dx\\
    &=\int^{b_{k+1}}_{b_{k-1}}vp(v)dv-b_{k-1}\int^{b_k}_{b_{k-1}}p(v)dv-b_{k+1}\int^{b_{k+1}}_{b_k}p(v)dv=0.\\
\end{split}
\end{equation}
\end{proof}

\begin{corollary} \label{truncated}
For a truncated distribution $p(v)$, we have $l=\min \{v | p(v)=0, \lim_{\epsilon\to 0^+}p(v+\epsilon)>0\}\in\{b_k\}$ and $r=\max\{v | p(v)=0, \lim_{\epsilon\to 0^+}p(v-\epsilon)>0\}\in\{b_k\}$. This means that the range of optimal quantization levels is finite in practice.
\end{corollary}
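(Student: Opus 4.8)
The plan is to show that the two extreme optimal levels must sit exactly at the edges of the support: (i) any valid unbiased random-rounding scheme is forced to have its levels cover the support, and (ii) a level placed strictly outside the support can always be nudged toward the nearest edge to strictly decrease the distortion $D$ of Proposition~\ref{optimal goal}. I will carry out the argument for the left edge $l$; the right edge $r$ is completely symmetric.

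First I would pin down the feasible set. For the multi-level rule $\mathcal{Q}$ to be defined at every value $v$ that a gradient coordinate can take, its rounding probabilities $\frac{b_k-v}{b_k-b_{k-1}}$ and $\frac{v-b_{k-1}}{b_k-b_{k-1}}$ must lie in $[0,1]$, so every $v$ with $p(v)>0$ has to fall in some cell $[b_{k-1},b_k]$; and to keep $\mathcal{Q}$ unbiased on such $v$ (the unbiasedness requirement, Assumption~1) we cannot instead map out-of-range values deterministically. Hence the smallest level $b_m$ satisfies $b_m\le l$ and the largest level $b_M$ satisfies $b_M\ge r$, so the minimization in the theorem effectively runs over level sequences with $b_m\le l\le r\le b_M$.

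Next I would show the left constraint is active at the optimum. Let $b_j$ be the largest level with $b_j\le l$ (it exists because $b_m\le l$), and suppose, for contradiction, $b_j<l$; then $b_{j+1}>l$ by maximality of $j$. Writing $D=\sum_k\int_{b_{k-1}}^{b_k}(v-b_{k-1})(b_k-v)p(v)\,dv$ as in the theorem's proof, note that $p$ vanishes on $(-\infty,l]$ (it is the density of a truncated distribution with left edge $l$) but carries positive mass on $(l,l+\delta)$ for small $\delta$, hence on $(l,b_{j+1})$. Now replace $b_j$ by $l$ while keeping all other levels fixed; the new sequence is still feasible and still unbiased. Only the at most two cells incident to $b_j$ are affected: the cell below $b_j$ lies inside $(-\infty,l]$ and keeps contributing $0$, while on the cell above it the integrand weight drops from $(v-b_j)(b_{j+1}-v)$ to $(v-l)(b_{j+1}-v)$, a strict decrease at every $v\in(l,b_{j+1})$. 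Thus $D$ strictly decreases, contradicting optimality, so $b_j=l$ and $l\in\{b_k\}$. (Equivalently, only the cell $[b_j,b_{j+1}]$ depends on $b_j$ there, and $\frac{\partial D}{\partial b_j}=-\int_{b_j}^{b_{j+1}}(b_{j+1}-v)p(v)\,dv<0$, so $D$ decreases as $b_j\uparrow l$.) The mirror argument on $b_M$ and $r$ gives $r\in\{b_k\}$.

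Finally, since the gradient coordinates encountered in training take values in a bounded range, $l$ and $r$ are finite real numbers; because the levels are monotonically ordered between $b_m=l$ and $b_M=r$, the whole optimal level set is contained in the finite interval $[l,r]$, which is exactly the claimed finiteness. I expect step (i) to be the main obstacle: carefully arguing from unbiasedness on a truncated distribution that the extreme levels must bracket the support, since this is what makes the optimum a boundary optimum of the feasible region rather than an interior stationary point — so the first-order condition of the theorem does not by itself pin down $b_m$ and $b_M$, and the monotonicity/perturbation argument of step (ii) is needed to finish.
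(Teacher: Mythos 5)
Your proof is correct and reaches the paper's conclusion, but by a somewhat different route. The paper's own proof is a one-line rearrangement of the stationarity identity from the theorem: it takes the cell with $b_{k-1}\le r\le b_k$, uses $p\equiv 0$ above $r$ to collapse the optimality condition to $\int_{b_{k-1}}^{r}(v-b_{k-1})p(v)\,dv=0$, and concludes $b_{k-1}=r$ because the integrand is nonnegative while $p>0$ just below $r$. You instead run a feasibility-plus-perturbation argument: you first argue from well-definedness and unbiasedness of random rounding that the extreme levels must bracket the support, and then show $\partial D/\partial b_j<0$ for the largest level $b_j<l$, so sliding it up to $l$ strictly decreases the distortion --- a boundary-optimum argument rather than an appeal to a stationarity equation. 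Both hinge on the same sign-definite integral, but your version makes explicit two things the paper leaves implicit: why a cell $[b_{k-1},b_k]$ containing the support edge exists at all (the paper's ``Let $b_{k-1}\le r\le b_k$'' presupposes this), and the fact that the extreme levels sit on the boundary of the feasible region, where the unconstrained first-order condition of the theorem need not hold with equality, so a monotonicity argument is the right tool. The paper's argument is shorter because it works with the level adjacent to the edge from inside the support, where the stationarity identity applies verbatim; yours is longer but more self-contained and more careful about the constrained nature of the optimum.
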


\begin{proof}
Let $b_{k-1}\leq r\leq b_{k}$. Rearrange the optimal condition and we will have $\int^{r}_{b_{k-1}}(v-b_{k-1})p(v)dv=0$. Thus $b_{k-1}=r$, i.e., $r\in \{b_k\}$. Similarly we also have $l\in\{b_k\}$.
\end{proof}

\begin{remark}
For uniform gradient distribution, the optimal condition becomes $b_k=\frac{1}{2}(b_{k-1}+b_{k+1})$.
\end{remark}

\begin{remark}
We could further simplify the optimal condition as
\begin{equation} \label{simplified optimal condition}
\int^{b_{k+1}}_{b_k}p(v)dv=\frac{\int^{b_{k+1}}_{b_{k-1}}(v-b_{k-1})p(v)dv}{b_{k+1}-b_{k-1}}.
\end{equation}
In particular, for gradient $\mathcal{G}\in \mathbb{R}^D$ with discrete values, Eq.~(\ref{simplified optimal condition}) leads to
\begin{equation}
|\{b_k\leq v\leq b_{k+1}|v\in \mathcal{G}\}|=\frac{\sum_{\{b_{k-1}\leq v \leq b_{k+1}|v\in \mathcal{G}\}}(v-b_{k-1})}{b_{k+1}-b_{k-1}}.
\end{equation}
\end{remark}

\subsection{Binary Quantization} \label{bingrad}

Scaled SignSGD deterministically quantizes the whole gradient as,
\begin{equation}\label{scaled signsgd}
    \mathcal{Q}(\mathcal{G})=\frac{\|\mathcal{G}\|_1}{\text{dim}(\mathcal{G})}\cdot \text{sign}(\mathcal{G}).
\end{equation}

According to Corollary \ref{truncated}, the minimum and maximum quantization levels for random rounding quantization should be the largest and smallest gradient value respectively. It leaves no/little space for the optimization of 2/3 level unbiased quantization. However, using $\{v_{min}, v_{max}\}$ as quantization levels is not resilient to outlier gradient values, leading to large quantization range and non-trivial quantization error. To remove the effect of outliers, we first propose the partially biased binary quantization scheme BinGrad-pb with levels $\{b_{-1},b_1\}$:
\begin{equation}\label{bingrad-pb}
\mathcal{Q}(v)=
\begin{cases}
    b_{-1}, \quad \text{if} \quad v< b_{-1},\\
    b_1, \quad \text{if} \quad v\geq b_1,\\
    b_{-1} \text{ with prob } \frac{b_1-v}{b_1-b_{-1}}, b_{1} \text{ with prob } \frac{v-b_{-1}}{b_1-b_{-1}}, \quad \text{if} \quad b_1<v\leq b_{-1}.\\
\end{cases}
\end{equation}

It is partially biased because for $v\in(b_{-1}, b_1)$ we use the random rounding quantization and the unbiased property is preserved. For BinGrad-pb, we assume that the gradient follows any zero-mean symmetric distribution. We take the gradient of the quantization error $2\int^{b_1}_{0}p(v)(b_1^2-v^2)dv + 2\int^{\infty}_{b_1}p(v)(v-b_1)^2dx$ regarding $b_1$ to zero and reach the following optimal condition:
\begin{equation} \label{bingrad-pb optimal condition}
\begin{split}
    b_1\int^{\infty}_{0}p(v)dv = \int^{\infty}_{b_1}p(v)vdv.
\end{split}
\end{equation}

For gradient $\mathcal{G}\in \mathbb{R}^d$ with discrete values, we minimize the absolute difference of the left-hand side and right-hand side of Eq.~(\ref{bingrad-pb optimal condition}) to get the solution of $b_1$. To further reduce the quantization error, we then propose the fully biased binary quantization scheme BinGrad-b using deterministic quantization:
\begin{equation}\label{bingrad-b}
\mathcal{Q}(v)=
\begin{cases}
b_{-1}, \quad \text{if} \quad  v < b_0,\\
b_{1}, \quad \text{if} \quad v \geq b_0.
\end{cases}
\end{equation}
Then the quantization errors become $\int^{b_0}_{-\infty}p(v)(v-b_{-1})^2dv + \int^{\infty}_{b_0}p(v)(v-b_1)^2dv$. Take its gradient regarding $\{b_{-1},b_0,b_1\}$ to zero and we have the optimal condition for gradient with \textit{any} distribution as
\begin{equation} \label{bingrad-b optimal condition}
    b_0=\frac{b_{-1}+b_1}{2}, \, b_{-1}=\frac{\int^{b_0}_{-\infty}vp(v)dv}{\int^{b_0}_{-\infty}p(v)dv}, \, b_1=\frac{\int^{\infty}_{b_0}vp(v)dv}{\int^{\infty}_{b_0}p(v)dv}.
\end{equation}

From Eq.~(\ref{bingrad-b optimal condition}), $b_{-1}$ and $b_1$ are the mean value of $\{v|v<b_0, v\in\mathcal{G}\}$ and $\{v|v\geq b_0, v\in\mathcal{G}\}$ respectively. BinGrad-b uses deterministic quantization scheme rather than random rounding quantization. We can set $b_0$ in Eq.~(\ref{bingrad-b optimal condition}) to the mean value for ease of implementation. Both the computation complexity of Eqs.~(\ref{bingrad-pb optimal condition},\ref{bingrad-b optimal condition}) for gradient $\mathcal{G}\in \mathbb{R}^D$ is $\mathcal{O}(D)$, which is trivial compared with the training complexity on modern GPUs. In comparison, BinGrad-b achieves optimal quantization error, but introduces some bias; BinGrad-pb reduces the bias but enlarges the quantization error, which leads to a trade-off between bias and variance.

\section{ORQ \& BinGrad Algorithms}

\begin{algorithm}[t]
\caption{ORQ quantization levels via optimal condition Eq.~(\ref{simplified optimal condition}).}
\label{orq quantization levels}
\textbf{Initialize:} \#levels $s=2^K+1 (K=1,2,3,...)$, gradient distribution $p(v)$, $b_{-2^{K-1}}$, $b_{2^{K-1}}$, $l=-2^{K-1}$, $r=2^{K-1}$\;
Denote the solution $b_k$ from Eq.~(\ref{simplified optimal condition}) as $\mathcal{Q}^*(b_{k-1}, b_{k+1}, p)$\;
\textbf{Input:} l, r, p, K\;
\textbf{Output:} $\{b_k\}_{k\in\{l,l+1,...,r\}}=\mathcal{Q}_g(l,r,p,K)$\;
$b_{\frac{l+r}{2}}=\mathcal{Q}^*(b_l, b_r, p)$\;
\If{$K>1$}{
    $\{b_k\}_{k\in\{l,l+1,...,\frac{l+r}{2}\}}=\mathcal{Q}_g(l,\frac{l+r}{2},p,K)$\;
    $\{b_k\}_{k\in\{\frac{l+r}{2},\frac{l+r}{2}+1,...,r\}}=\mathcal{Q}_g(\frac{l+r}{2},r,p,K)$\;
}
\end{algorithm}

\begin{algorithm}[t]
\caption{Distributed SGD training with ORQ/BinGrad-b/BinGrad-pb.}
\label{distributed training}
\textbf{Initialize:} Model parameters $\textbf{x}_0$, $L$ workers (server included), learning rate $\{\gamma_t\}_{t=0}^{T-1}$\;
\For{$t=0,1,\cdots,T-1$}{
    Compute stochastic gradient $\mathcal{G}_t^l$\;
    Compute $\{b_k\}$ from Alg.~\ref{orq quantization levels}/Eq.~(\ref{bingrad-b optimal condition})/Eq.~(\ref{bingrad-pb optimal condition})\;
    Quantize $\mathcal{G}_t^l$ with random rounding/Eq.~(\ref{bingrad-b})/Eq.~(\ref{bingrad-pb})$\to \hat{\mathcal{G}}_t^l$\;
    Encode $\hat{\mathcal{G}}_t^l$ and send to server\;
    \If{is server}{
        Receive and decode $\{\hat{\mathcal{G}}_t^l\}_{l=1}^{L}$ from workers\;
        Broadcast $\bar{\mathcal{G}}_t=\sum^{L}_{l=1}\hat{\mathcal{G}}_t^l/L$ to workers\;
    }
    Update parameters $\textbf{x}_{t+1}\leftarrow \textbf{x}_{t}-\gamma_t \bar{\mathcal{G}}_t$\;
}
\end{algorithm}

Here we introduce the distributed training algorithm with Optimized Random Quantized Gradient (ORQ) and BinGrad. Although on commercial clusters it can be conducted in a decentralized ring-based all\_reduce manner without the server, parameter-server architecture can be applied to more general and practical scenarios, e.g. mobile devices which have strict requirements on communication. For ORQ we propose to use the greedy recursive Algorithm \ref{orq quantization levels} to determine the quantization levels. We also summarize the distributed training method with ORQ/BinGrad-b/BinGrad-pb in Algorithm \ref{distributed training}. If the averaged gradient at the server node needs to be low-level to accelerate the broadcast operation, we can add a step to a) keep $\{b_k\}$ coherent among all the worker nodes; or b) quantize the averaged gradient that the server sends back to the workers. In the single machine environment, we directly update the parameters with the quantized gradient from different quantization schemes.

\section{Experiments}

\begin{figure}[t]
    \centering
    \includegraphics[width=.24\linewidth]{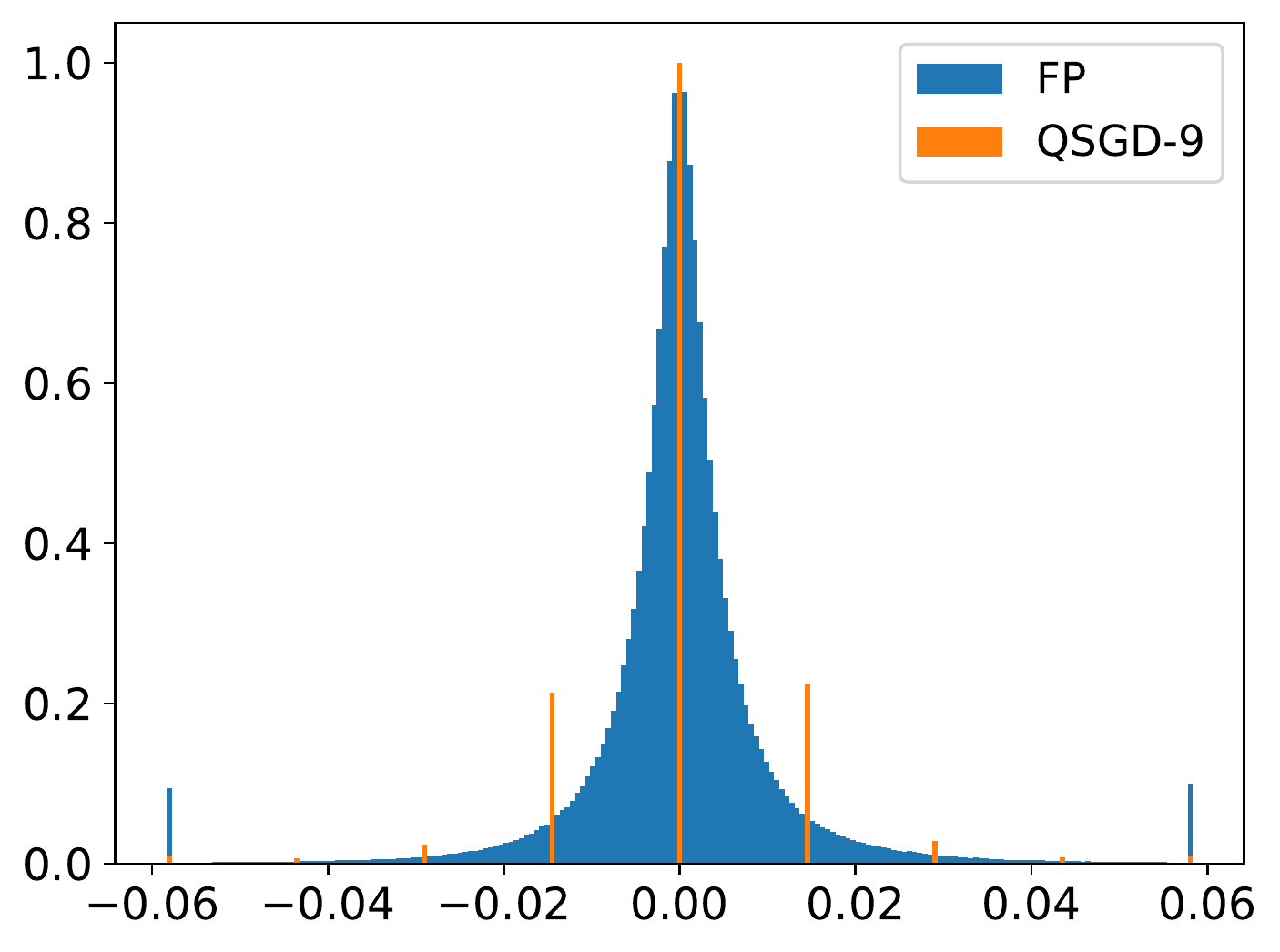}
    \includegraphics[width=.24\linewidth]{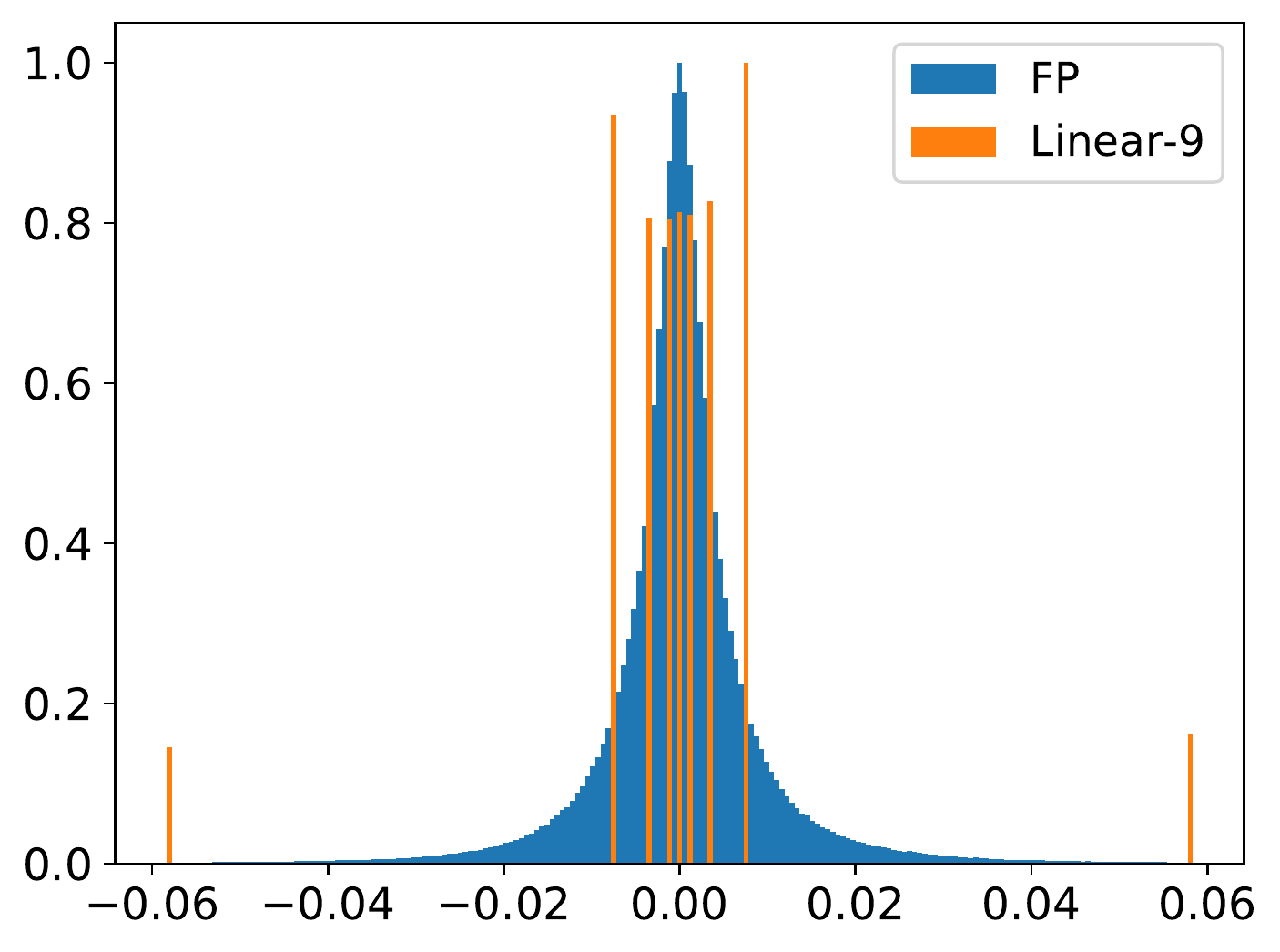}
    \includegraphics[width=.24\linewidth]{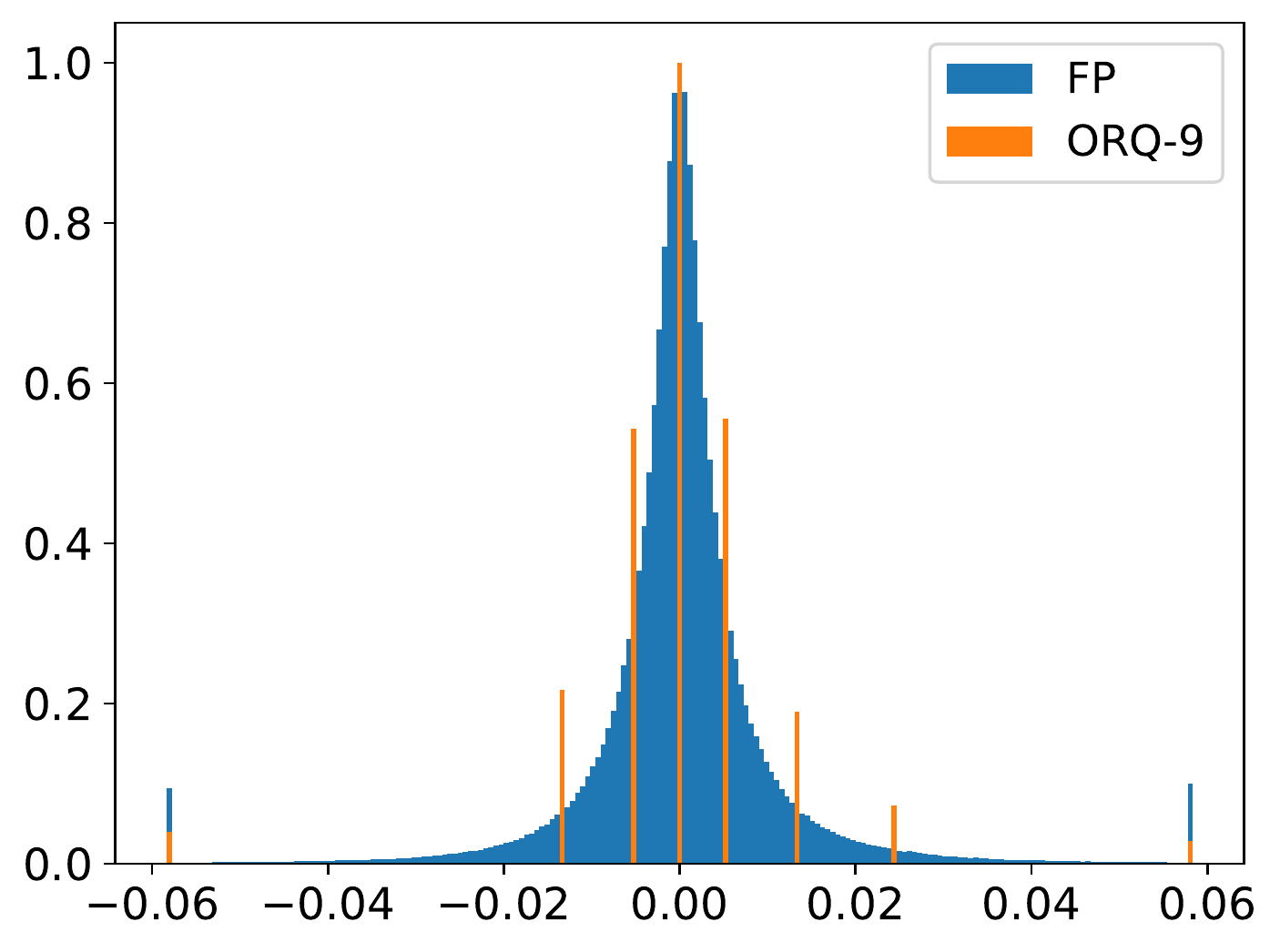}
    \includegraphics[width=.24\linewidth]{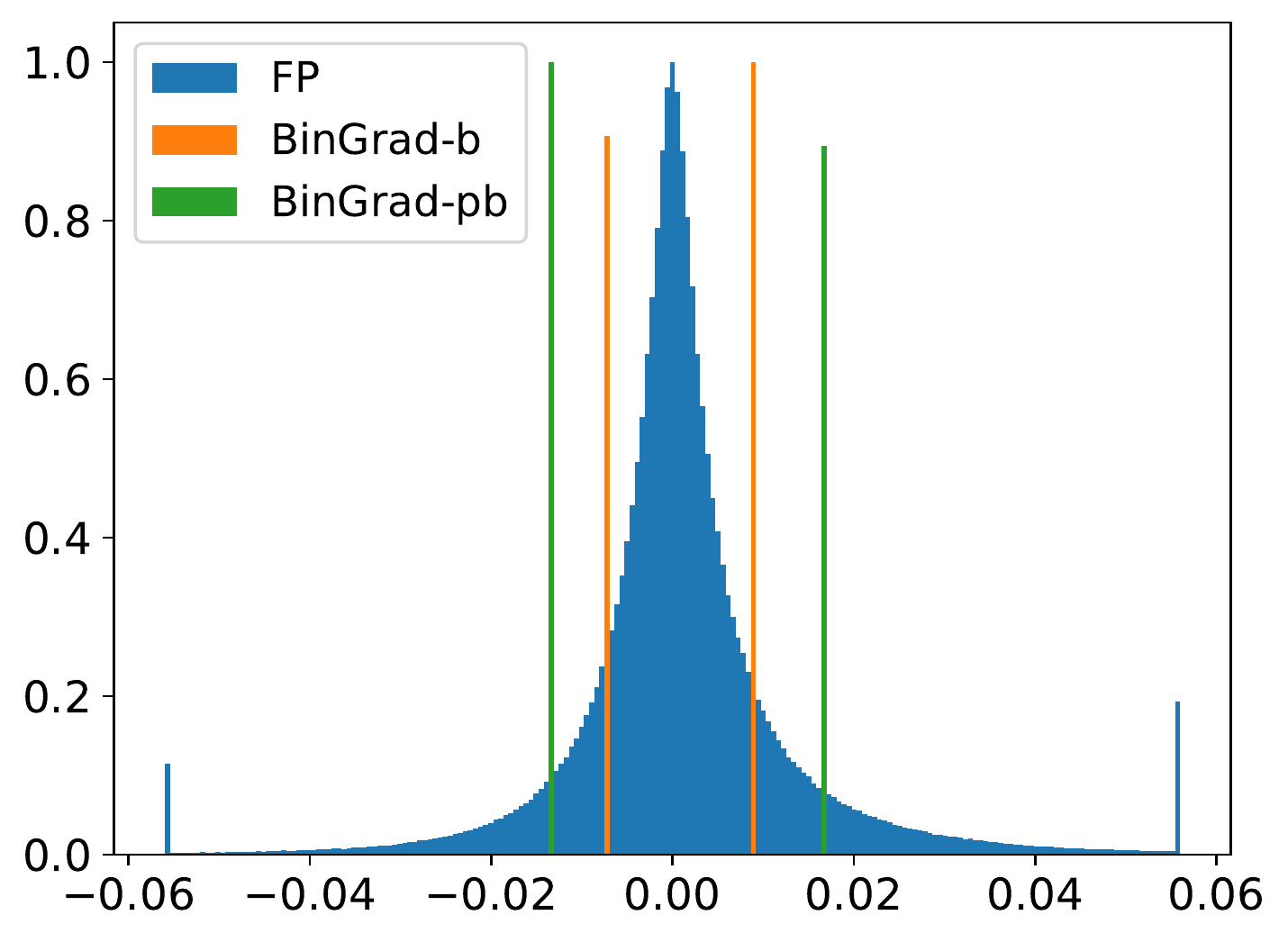}
    \caption{Gradient distribution (CIFAR-10, ResNet-110) of different methods. X axis represents the gradient value and Y axis represents the frequency normalized by the maximum value of the bins in the histogram. Full precision (FP) gradient is clipped into range (-2.5$\sigma$, 2.5$\sigma$).}
    \label{levels distribution}
\end{figure}

\begin{figure}[t]
    \centering
    \includegraphics[width=.32\linewidth]{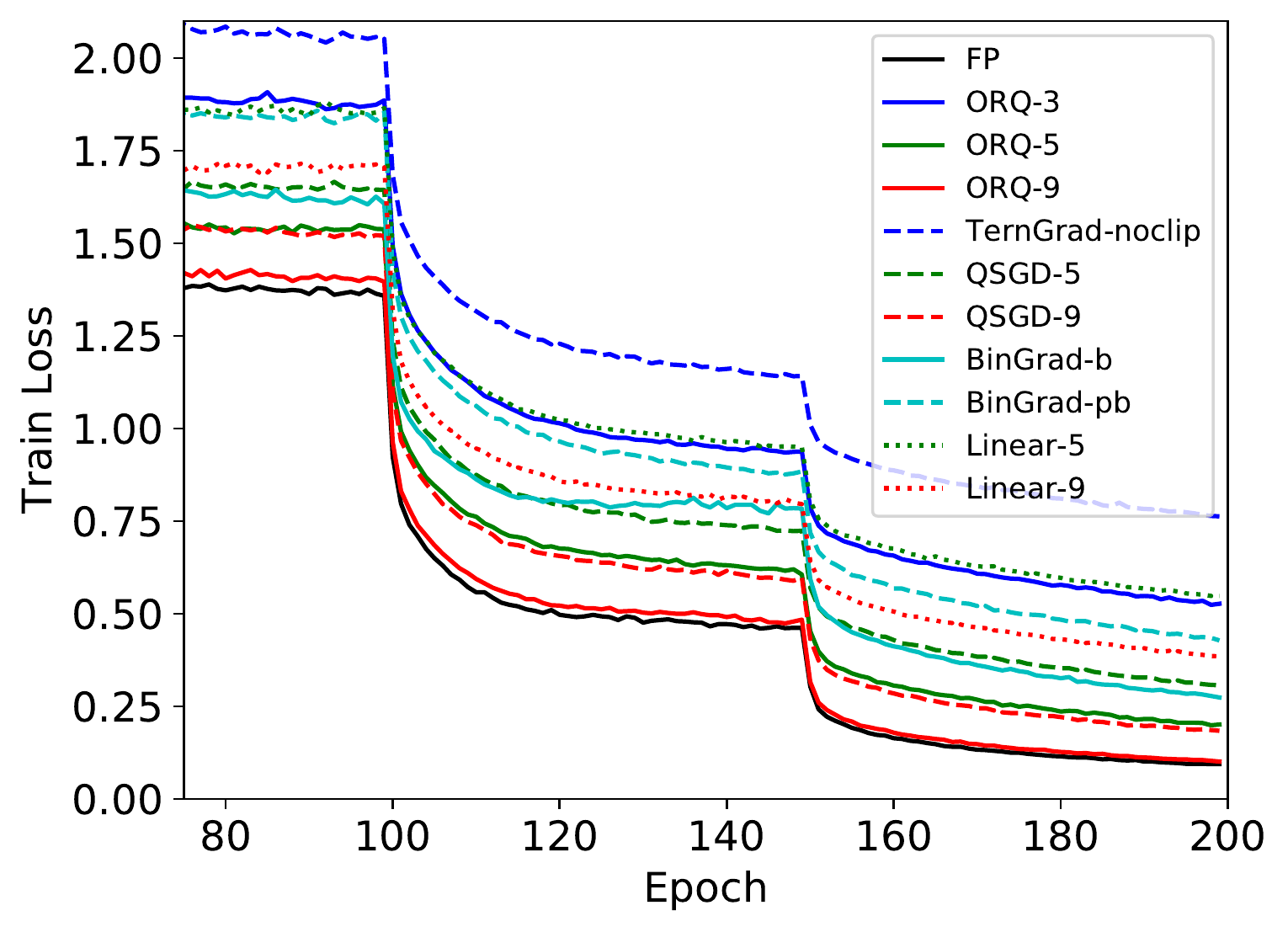}
    \includegraphics[width=.32\linewidth]{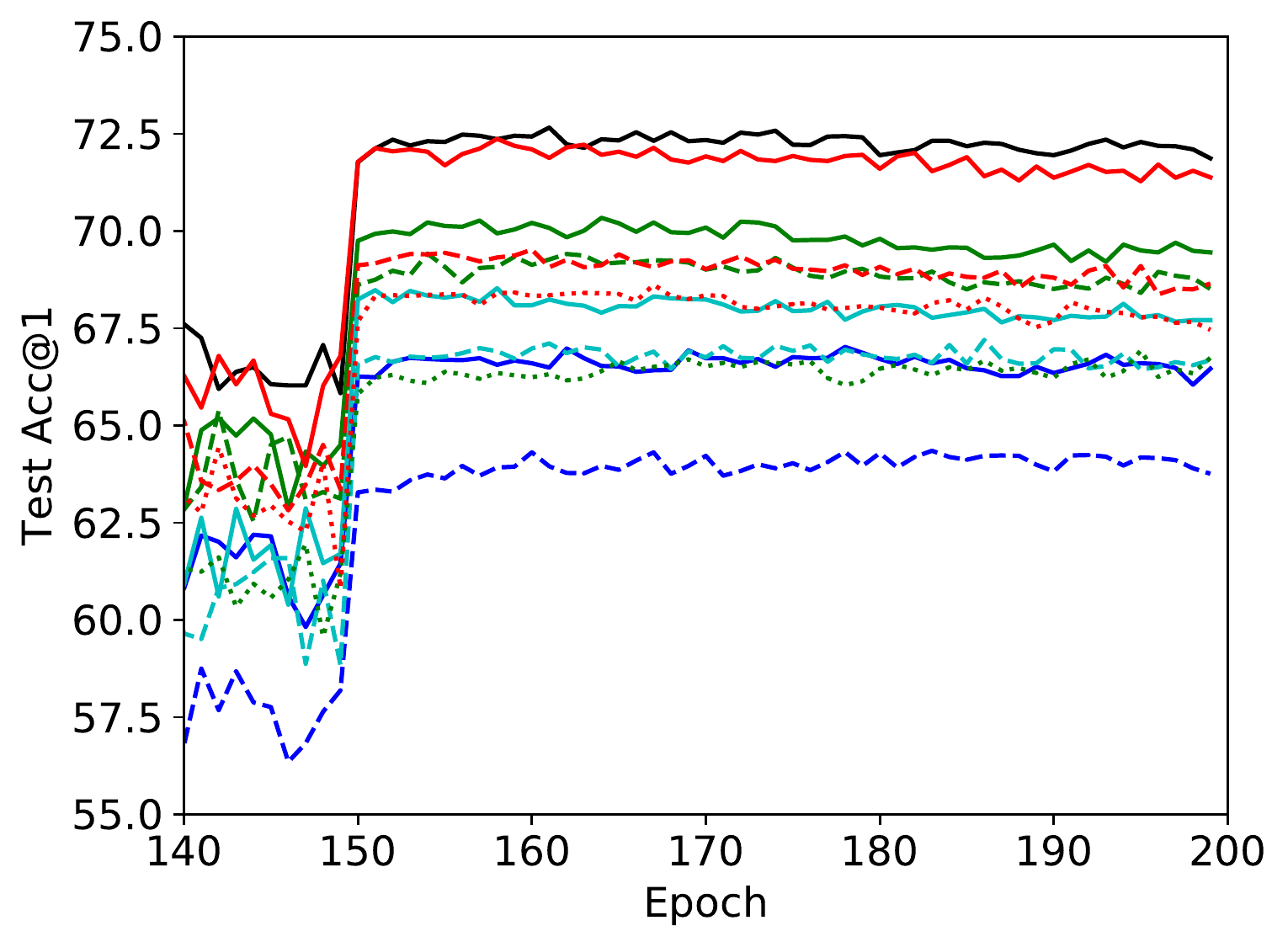}
    \includegraphics[width=.32\linewidth]{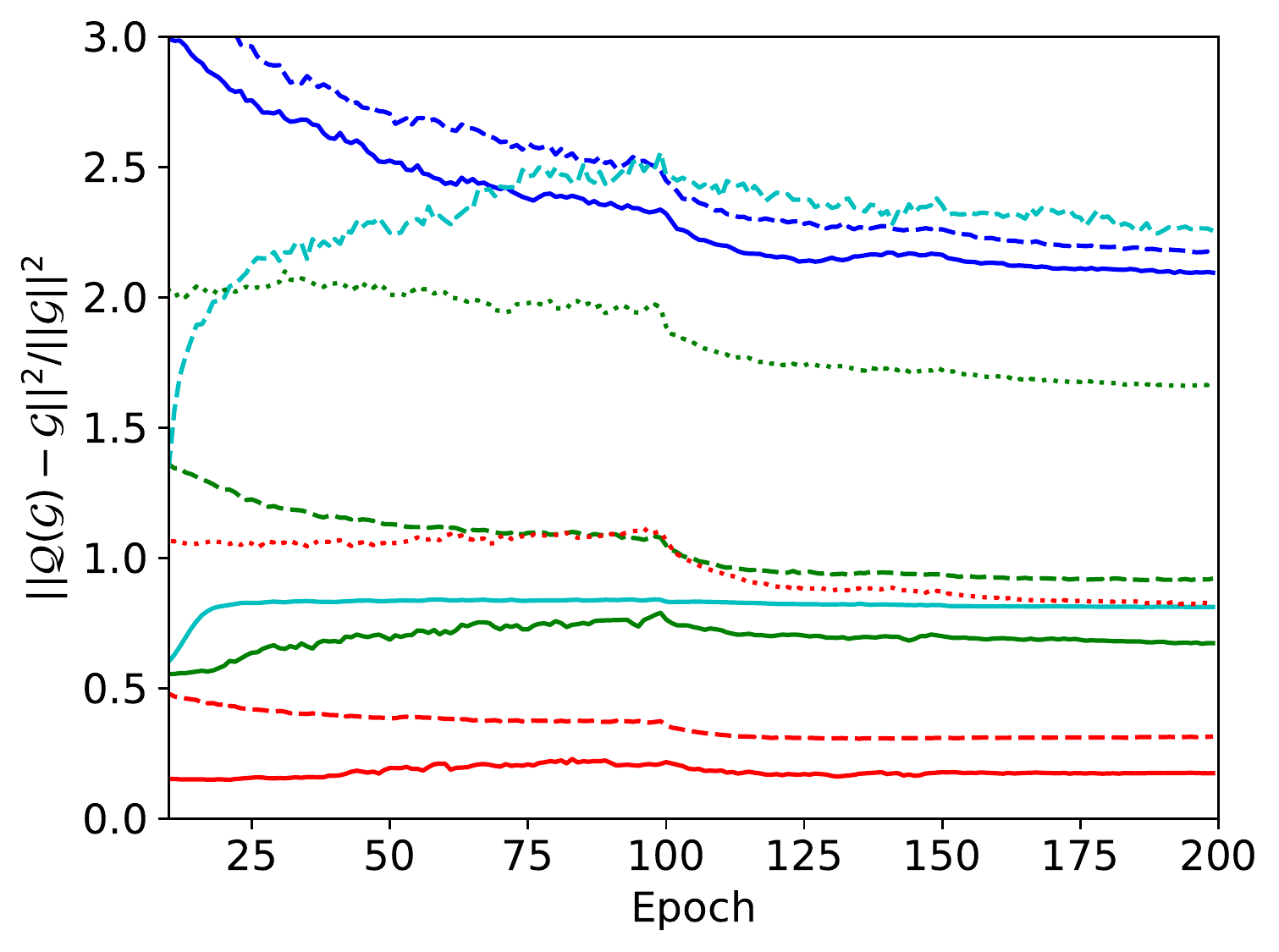}
    \includegraphics[width=.32\linewidth]{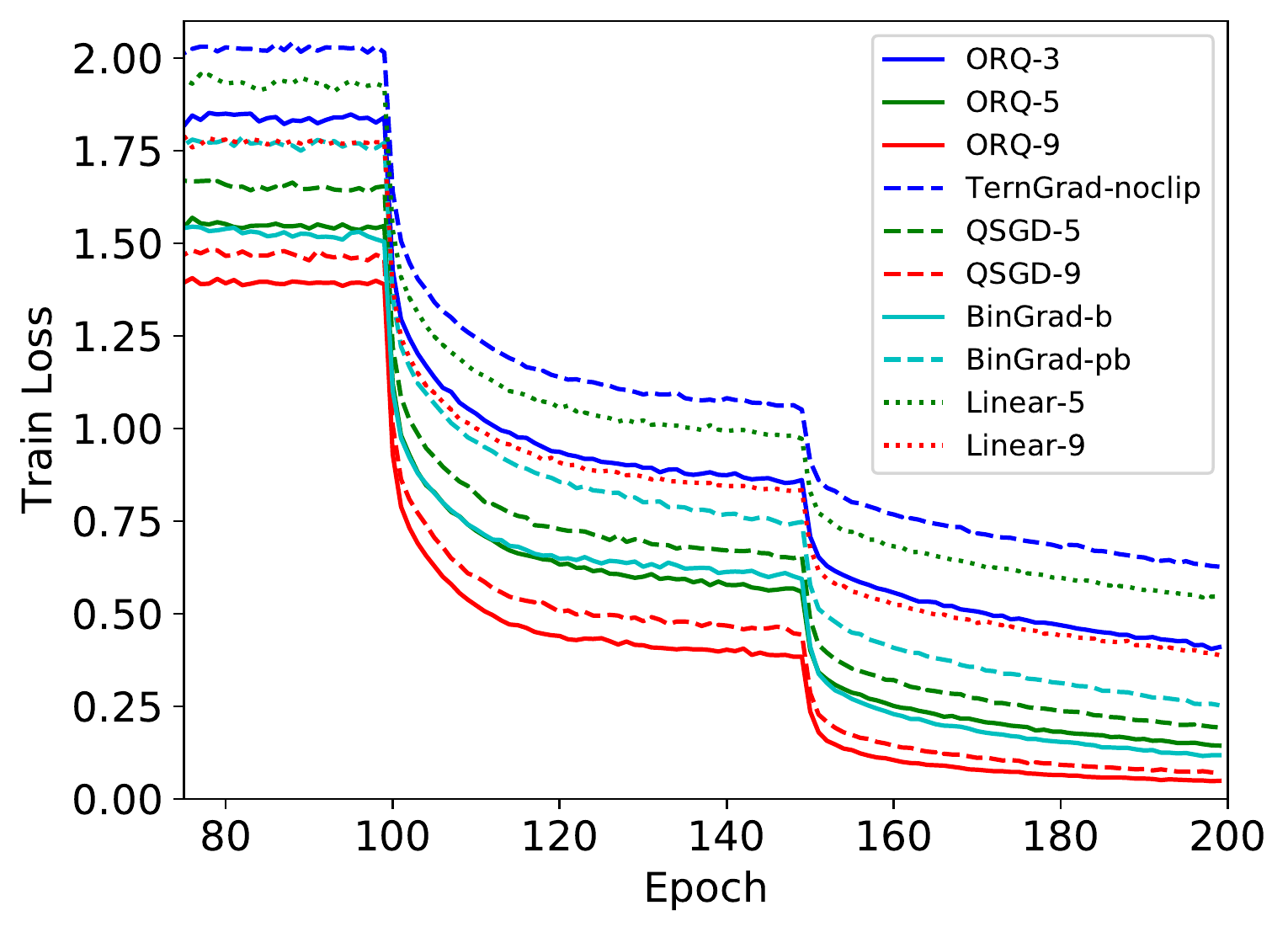}
    \includegraphics[width=.32\linewidth]{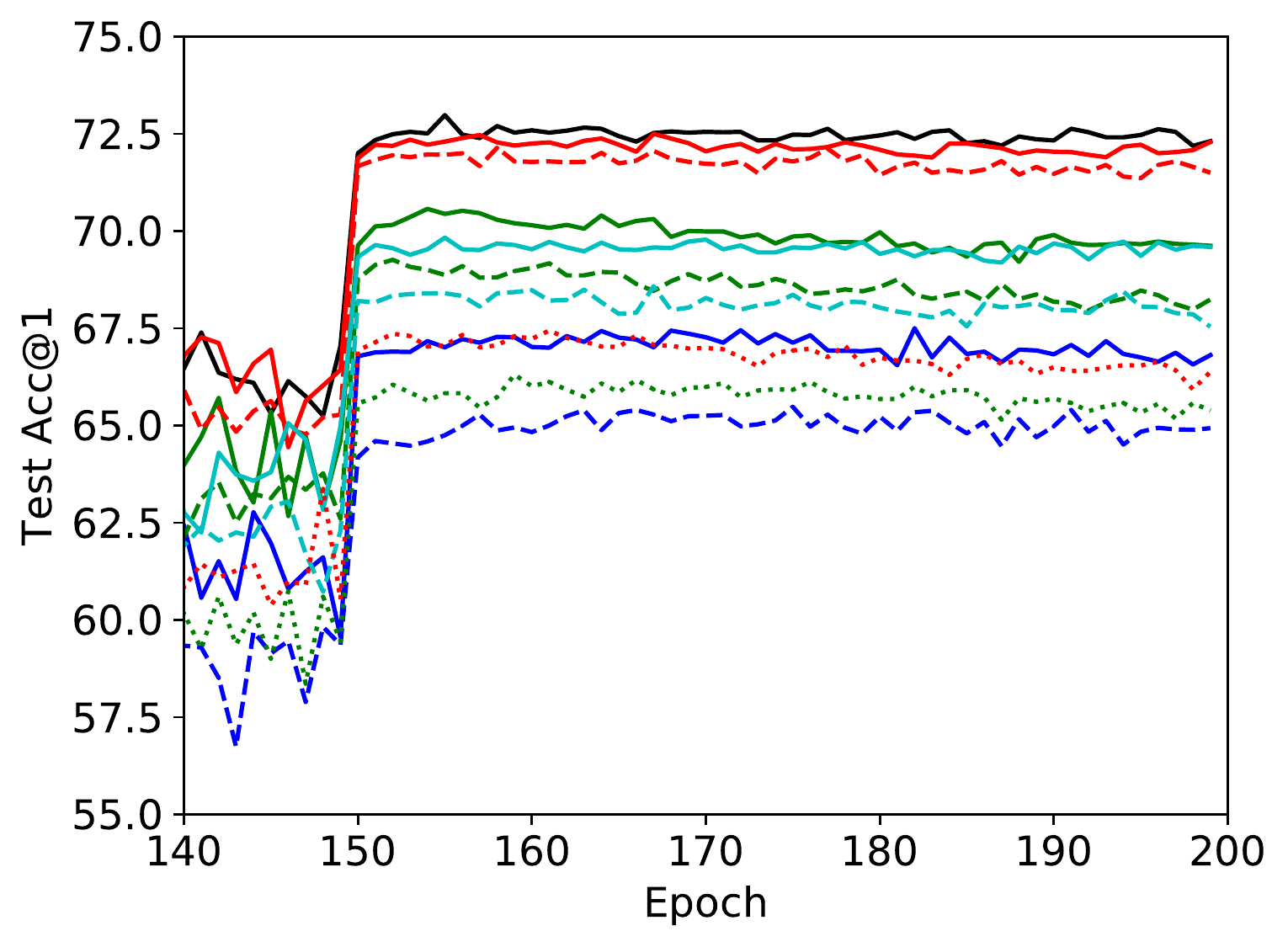}
    \includegraphics[width=.32\linewidth]{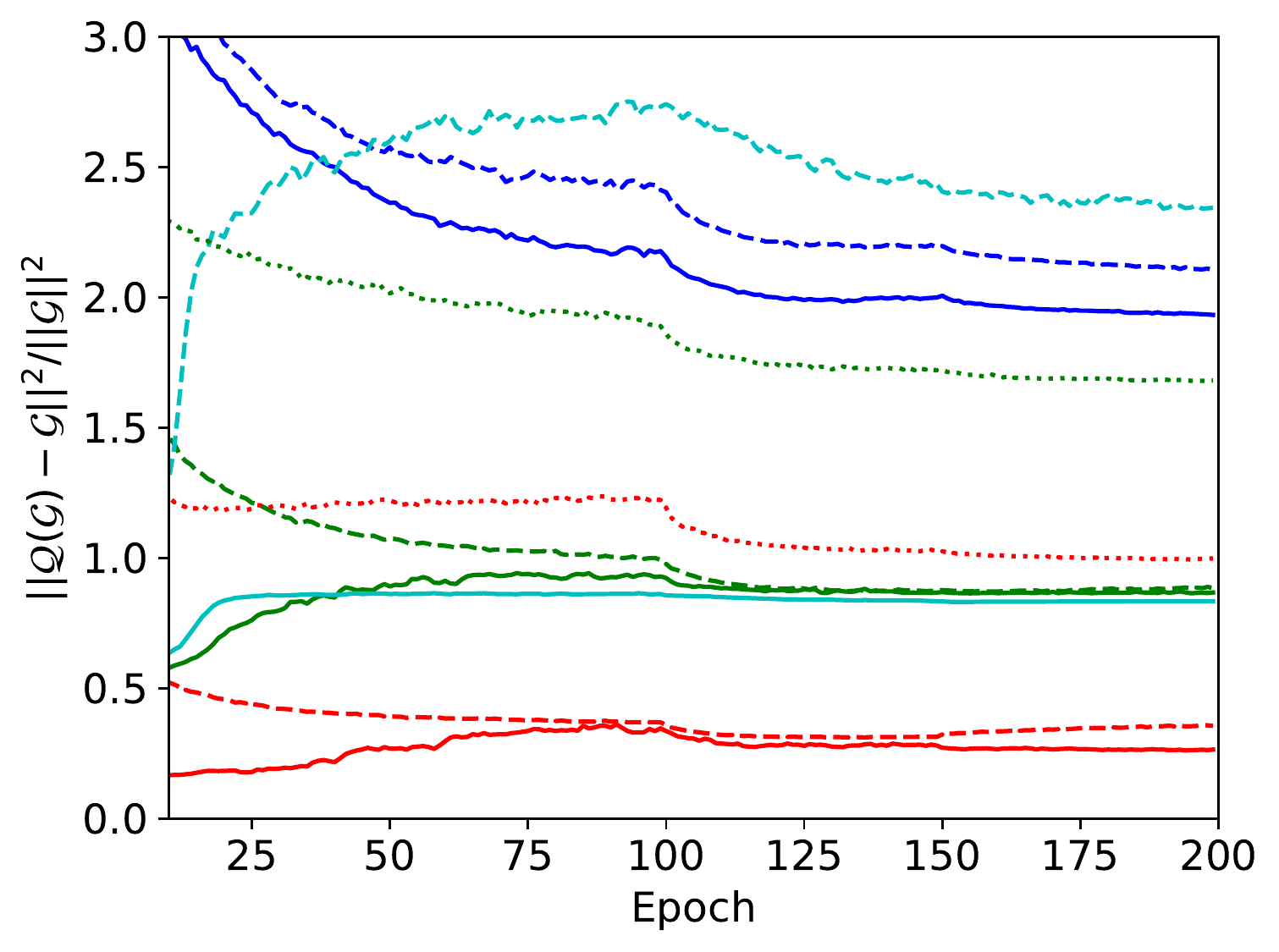}
    \includegraphics[width=.32\linewidth]{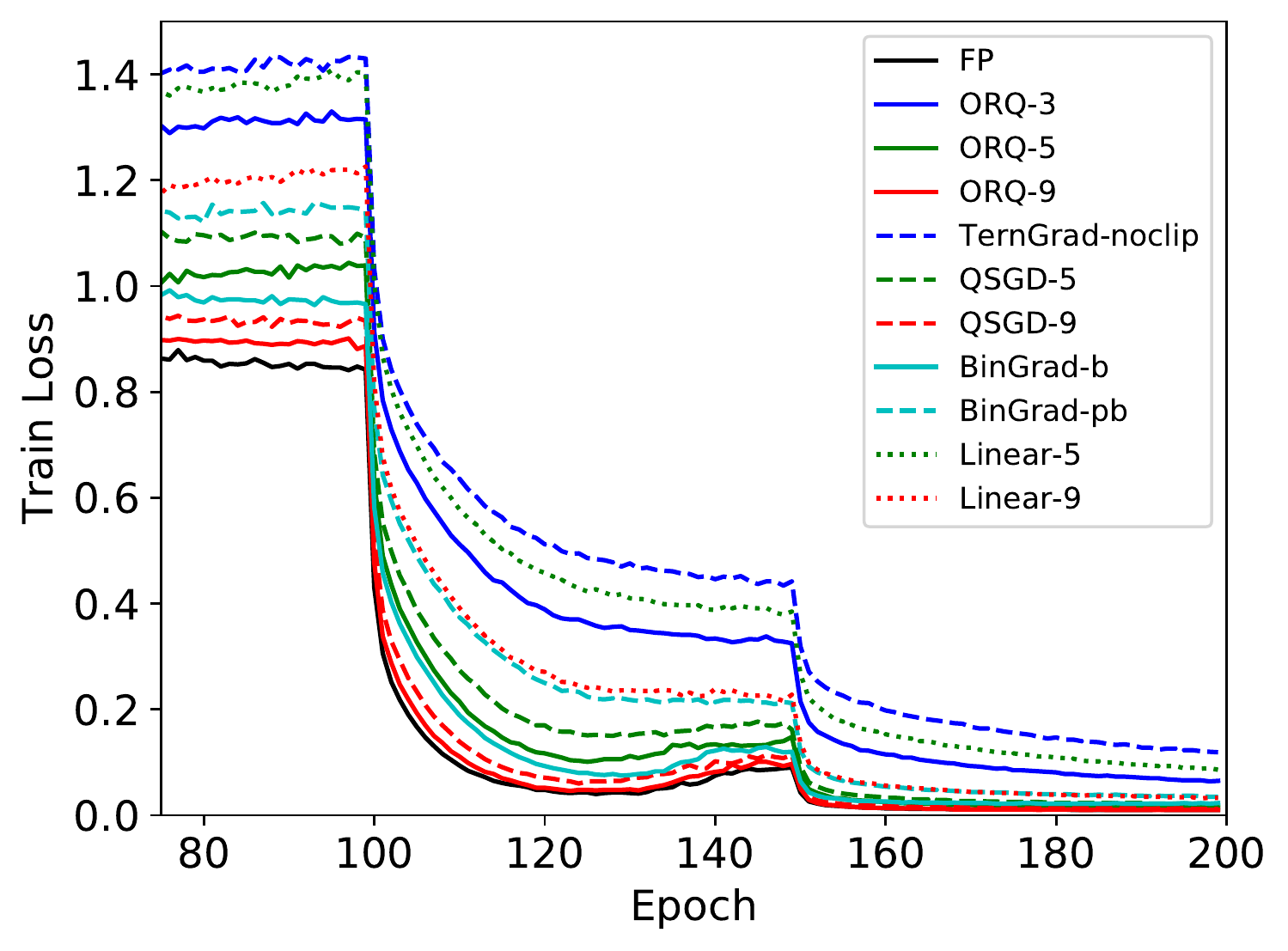}
    \includegraphics[width=.32\linewidth]{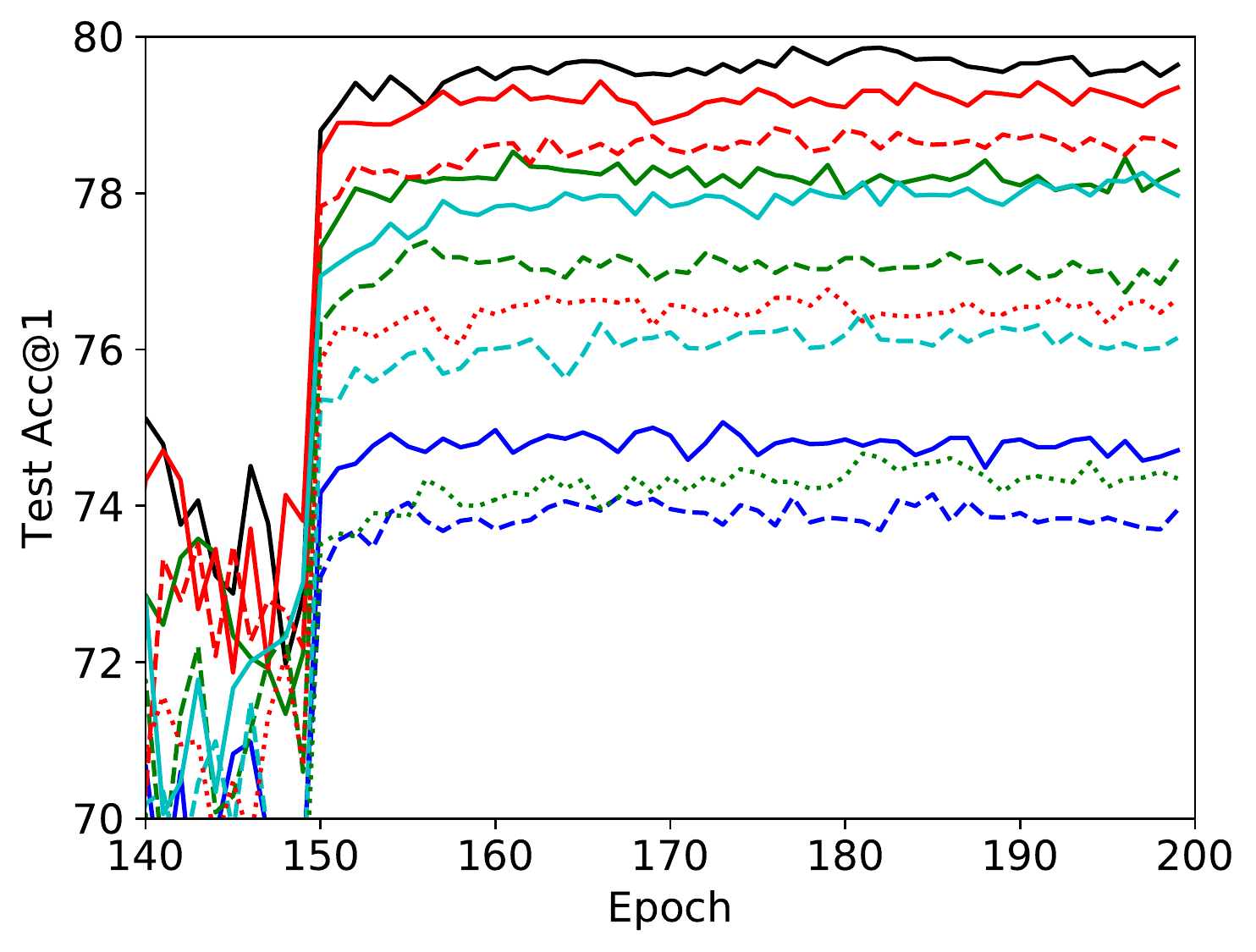}
    \includegraphics[width=.32\linewidth]{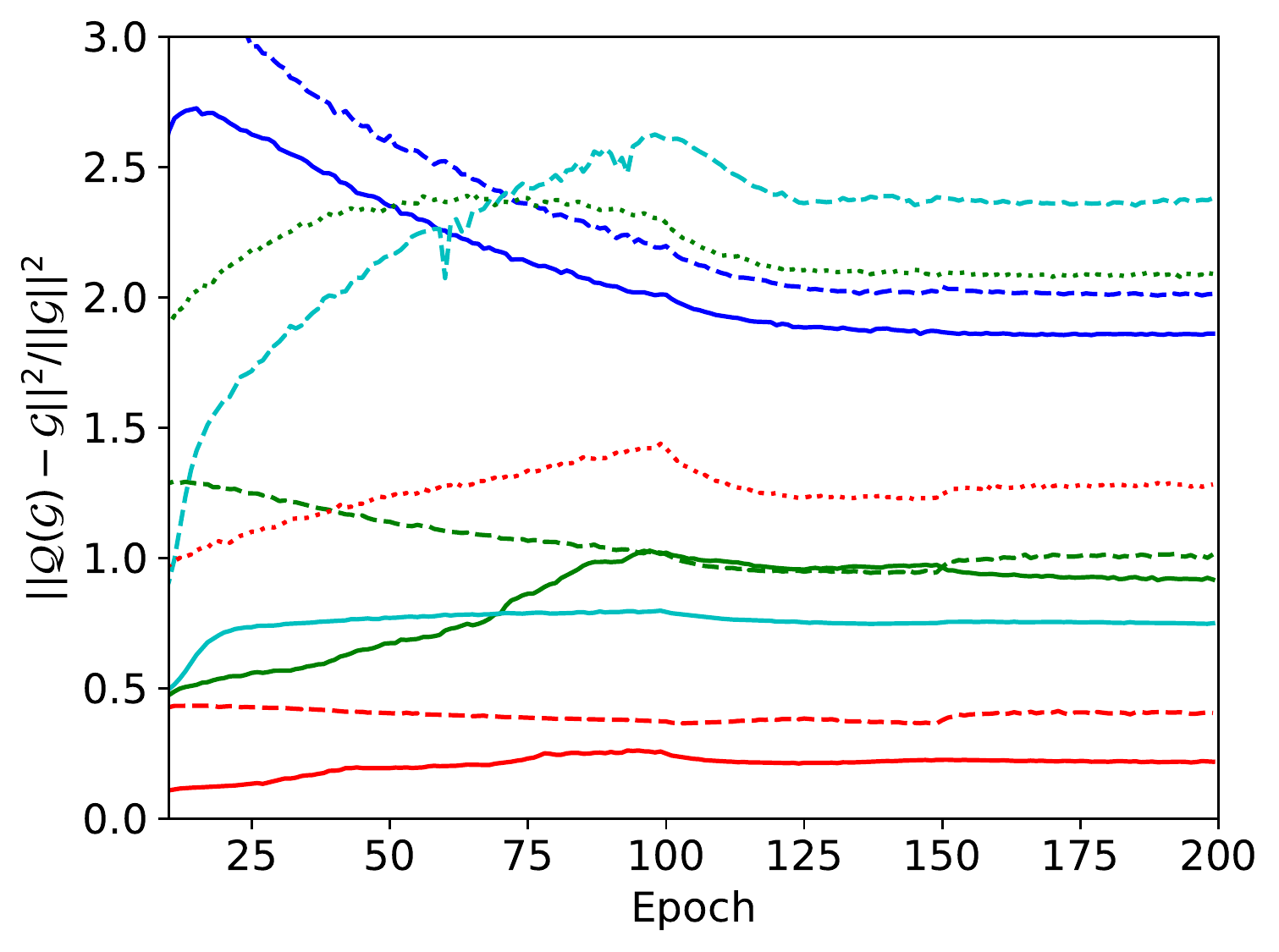}
    \caption{Training curves (CIFAR-100) of different methods. The model architectures from the first to the last row: ResNet-56, ResNet-110, GoogLeNet.}
    \label{cifar training}
\end{figure}

\begin{table}[t]
\caption{CIFAR-100 single worker testing accuracy (\%).}
\centering
\begin{tabular}{lcccc}
\toprule
Compression ratio & Method & ResNet-56 & ResNet-110 & GoogLeNet\\
\midrule
x1 & FP & \underline{72.66} & \underline{72.98} & \underline{79.86}\\
\midrule
\multirow{3}{*}{x32} & BinGrad-pb & 67.20 & 68.58 & 76.47\\
& BinGrad-b & \textbf{68.53} & \textbf{69.83} & \textbf{78.26} \\
& SignSGD & 67.81 & 68.21 & 75.53\\
\midrule
\multirow{2}{*}{x20.2} & TernGrad-noclip & 64.35 & 65.48 & 74.15\\
& ORQ-3 & \textbf{67.02} & \textbf{67.50} & \textbf{75.07}\\
\midrule
\multirow{3}{*}{x13.8} & QSGD-5 & 69.42 & 69.26 &77.38\\
& ORQ-5 & \textbf{70.34} & \textbf{70.57} & \textbf{78.53}\\
& Linear-5 & 66.93 & 66.31 & 74.67\\
\midrule
\multirow{3}{*}{x10.1} & QSGD-9 & 69.52 & 72.14 & 78.83\\
& ORQ-9 & \textbf{72.37} & \textbf{72.50} & \textbf{79.43}\\
& Linear-9 & 68.62 & 67.44 & 76.77\\
\bottomrule
\end{tabular}
\label{cifar100 single worker test acc}
\end{table}

\begin{table}[t]
\caption{CIFAR-10 testing accuracy (ResNet-110, d=512) regarding various bucket size.}
\centering
\begin{tabular}{ccccccccc}
\toprule
Method & 128 & 512 & 1024 & 2048 & 4096 & 8192 & 16384 & 32768 \\
\midrule
TernGrad-noclip & 90.86 & 90.23 & 89.58 & 89.37 & 88.08 & 87.87 & 86.75 & 85.63 \\
ORQ-3 & \textbf{91.66} & \textbf{90.92} & \textbf{90.73} & \textbf{90.18} & \textbf{88.99} & \textbf{88.86} & \textbf{87.70} & \textbf{87.08} \\
\bottomrule
\end{tabular}
\label{ciar10 bucket size test acc}
\end{table}

\begin{table}
\caption{Test accuracy (\%) regarding clipping factor (ResNet-110, d=512).}
\centering
\begin{tabular}{cccccc}
\toprule
Method & CIFAR & $c=1.7$ & $c=2.5$ \\
\midrule
\multirow{2}{*}{ORQ-3} & 10 & 93.05 (-0.35) & 92.26 (-1.14) \\
& 100 & 72.30 (-0.68) & 70.62 (-2.36) \\
\midrule
\multirow{2}{*}{ORQ-5} & 10 & 93.22 (-0.18) & 92.34 (-1.06) \\
& 100 & 72.86 (-0.12) & 71.15 (-1.83) \\
\midrule
\multirow{2}{*}{ORQ-9} & 10 & 93.91 (+0.51) & 92.41 (-0.99) \\
& 100 & 72.84 (-0.14) & 72.19 (-0.80) \\
\bottomrule
\end{tabular}
\label{cifar10 clip factor bucket size 512}
\end{table}

\begin{figure}[t]
    \centering
    \includegraphics[width=.32\linewidth]{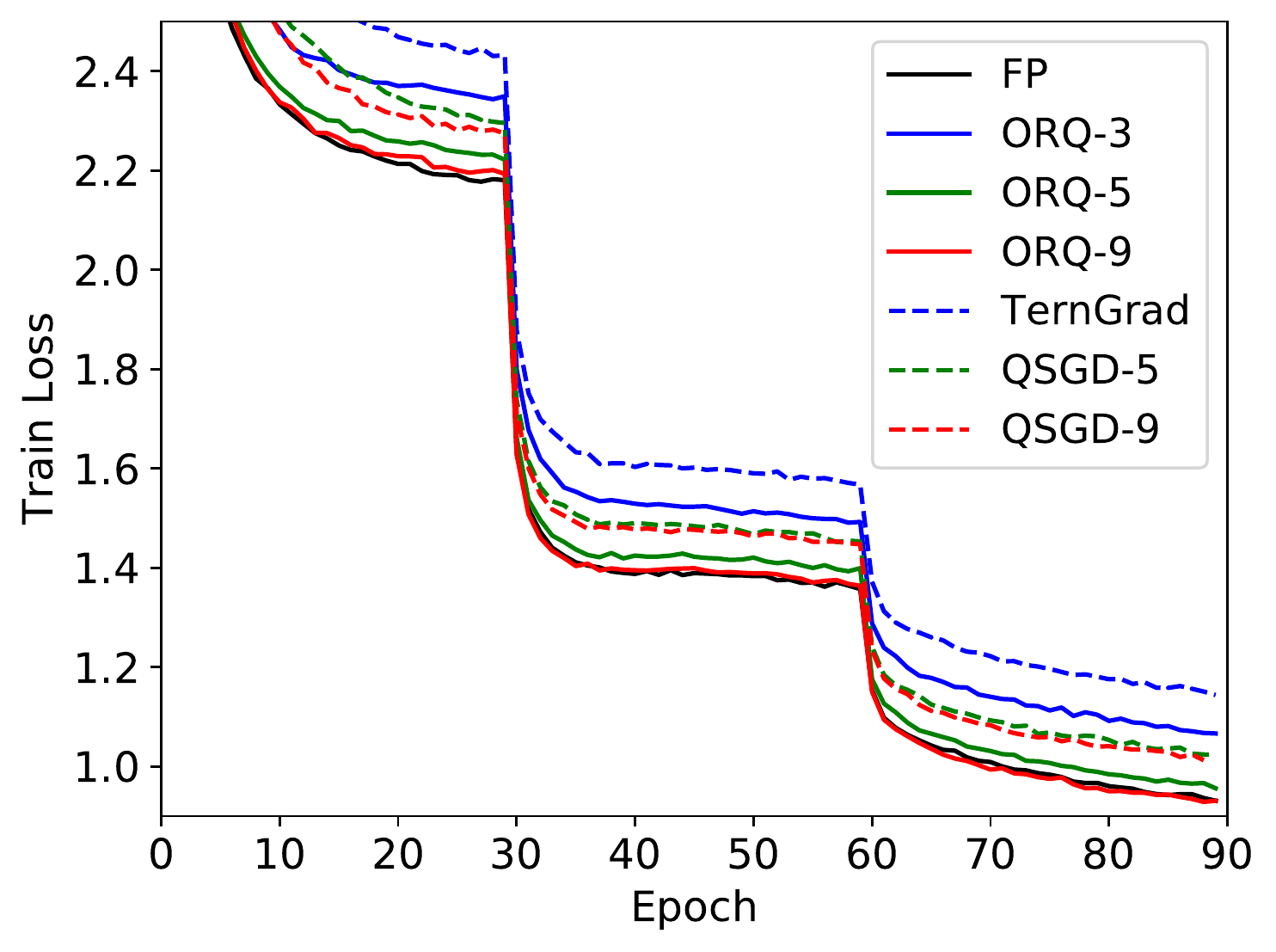}
    \includegraphics[width=.32\linewidth]{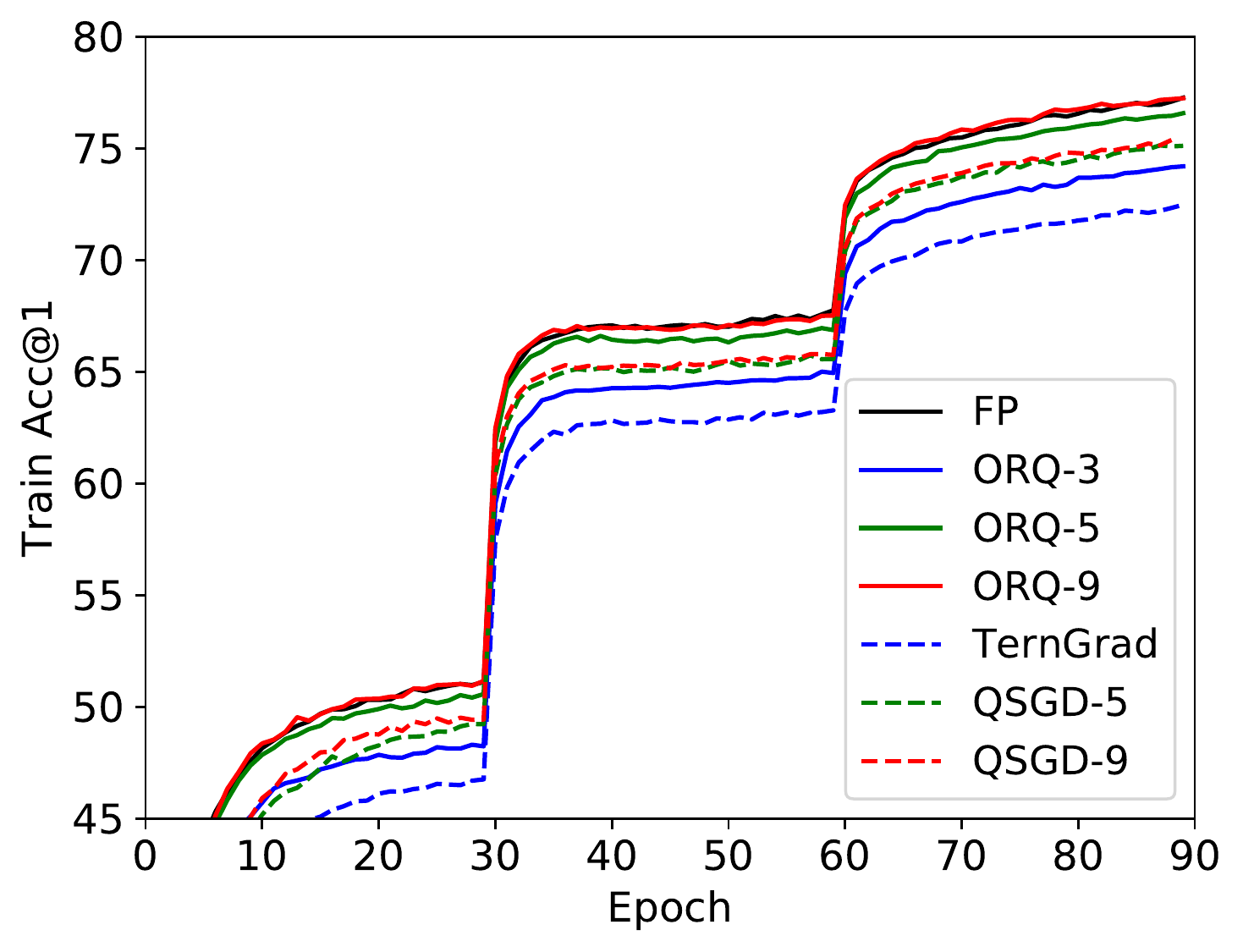}
    \includegraphics[width=.32\linewidth]{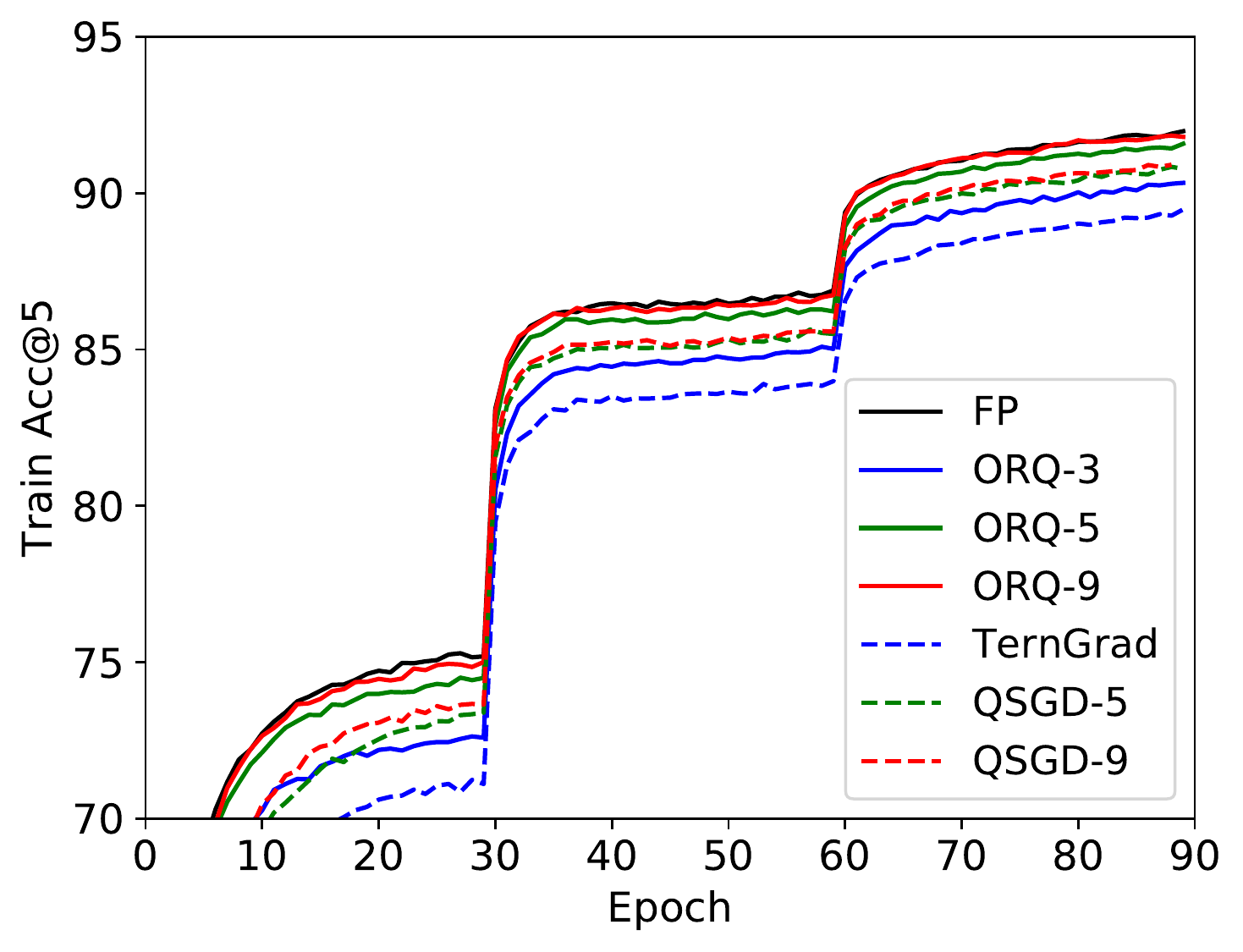}
    \includegraphics[width=.32\linewidth]{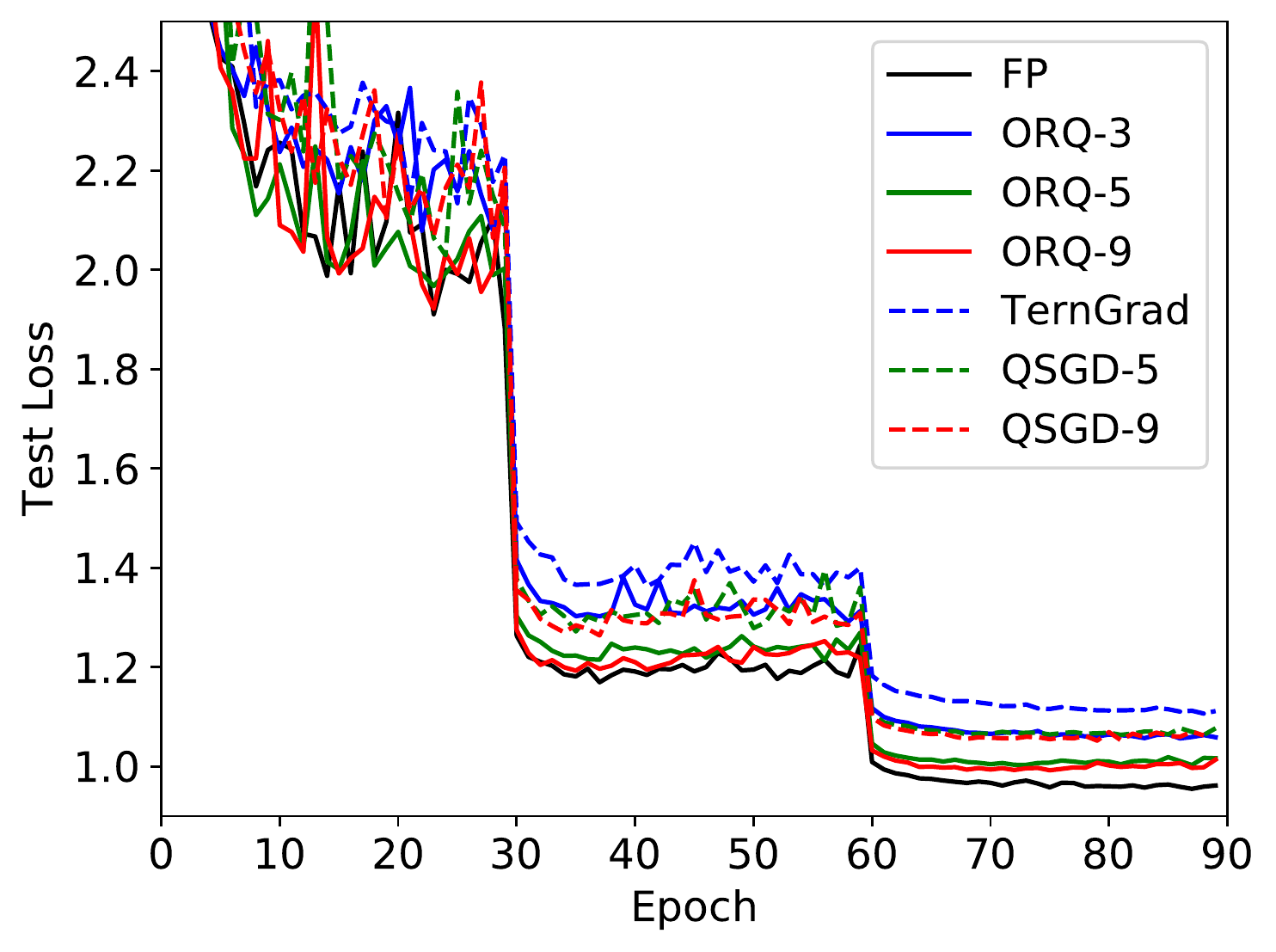}
    \includegraphics[width=.32\linewidth]{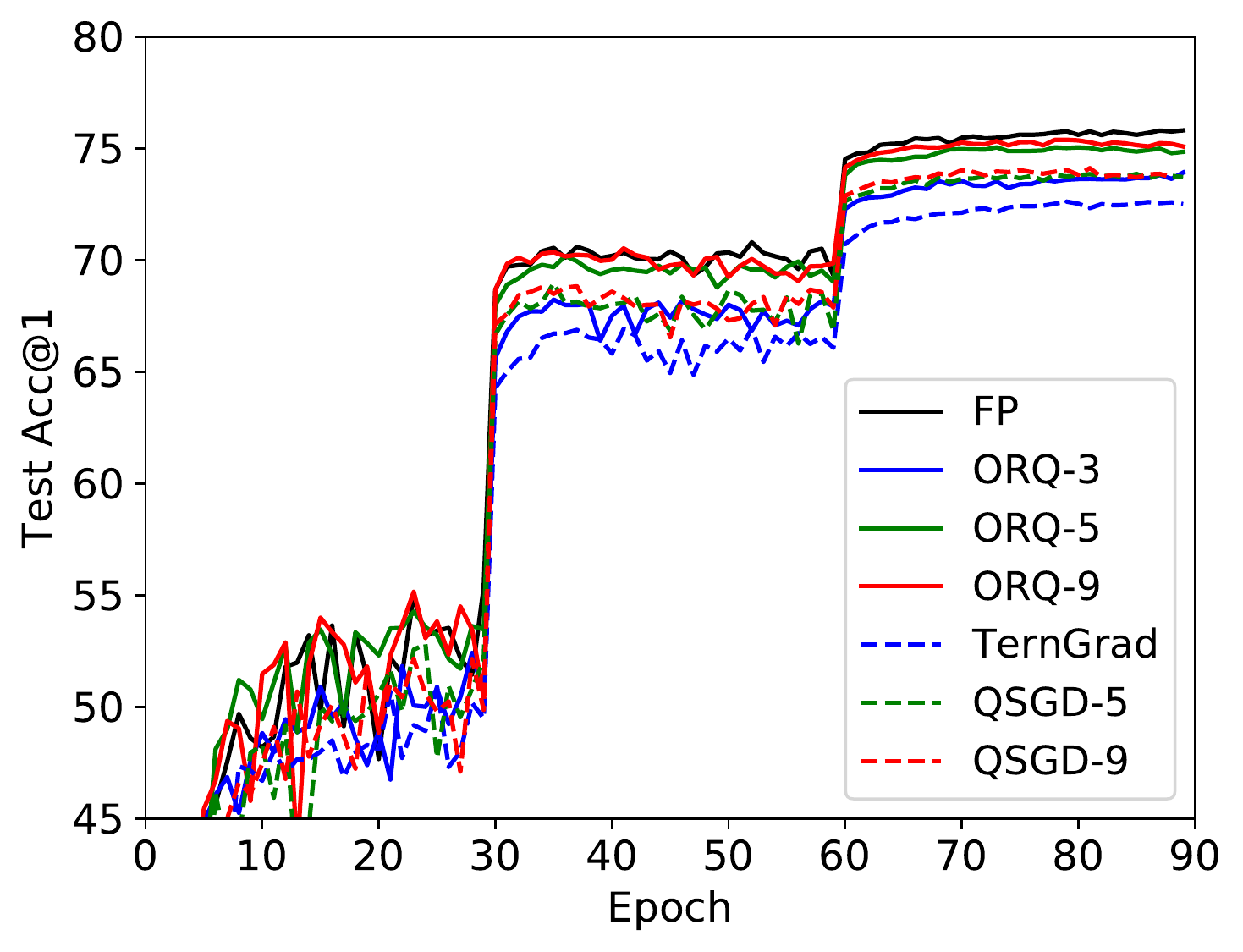}
    \includegraphics[width=.32\linewidth]{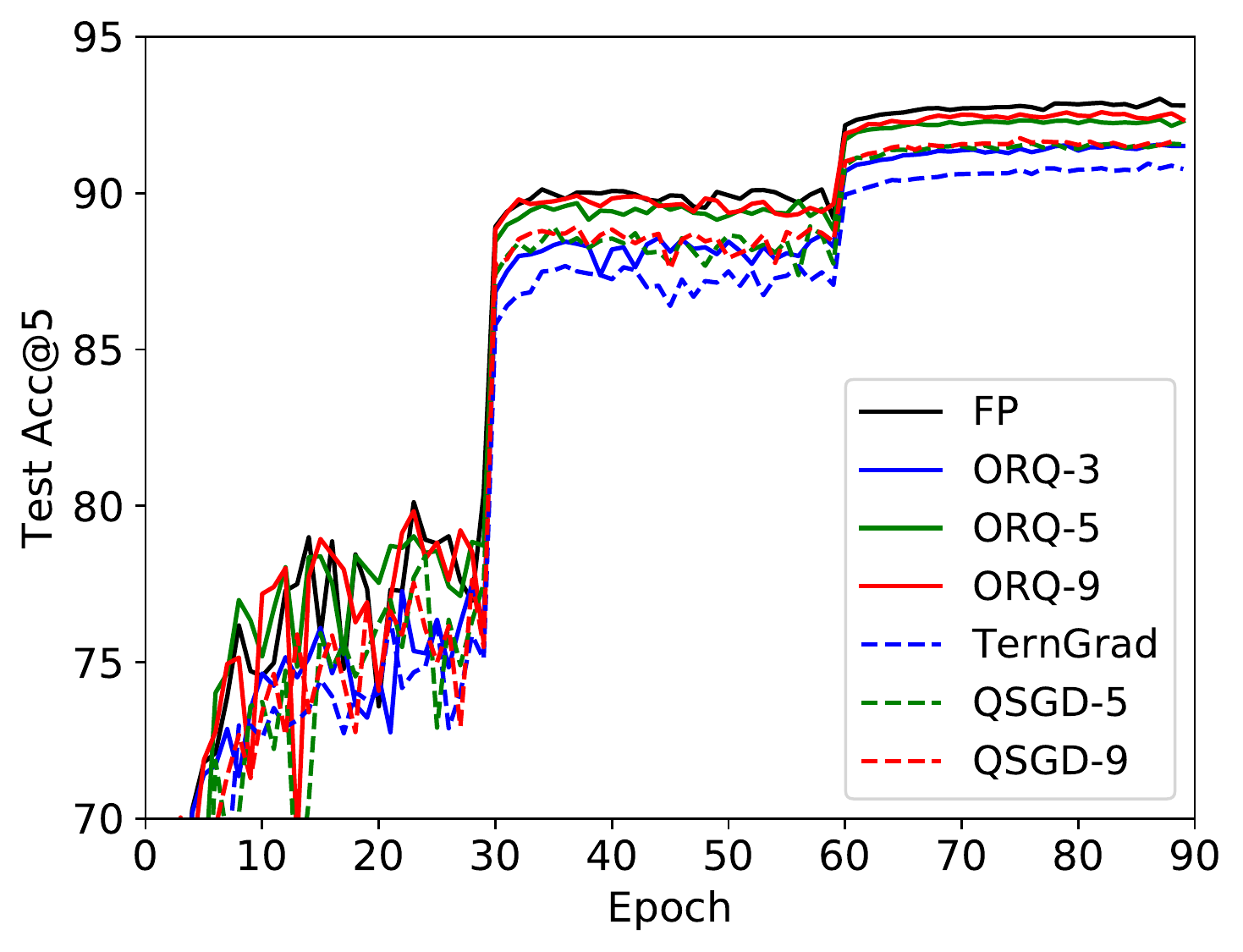}
    \caption{Training curves (ImageNet, ResNet-50) of different methods.}
    \label{imagenet training}
\end{figure}

\begin{table}[t]
\caption{ImageNet test top-1/5 accuracy (\%) of ResNet-50.}
\centering
\begin{tabular}{lcccc}
\toprule
Compression ratio & Method & Top-1 testing accuracy & Top-5 testing accuracy\\
\midrule
x1 & FP & 75.82 & 92.80\\
\midrule
\multirow{2}{*}{x20.2} & TernGrad & 72.62 (-3.20) & 90.70 (-2.10)\\
& ORQ-3 & \textbf{73.92 (-1.90)} & \textbf{91.51 (-1.39)}\\
\midrule
\multirow{2}{*}{x13.8} & QSGD-5 & 73.80 (-2.02) & 75.05 (-0.77)\\
& ORQ-5 & \textbf{75.05 (-0.77)} & \textbf{92.24 (-0.56)}\\
\midrule
\multirow{2}{*}{x10.1} & QSGD-9 & 74.12 (-1.70) & 91.65 (-1.15) \\
& ORQ-9 & \textbf{75.39 (-0.43)} & \textbf{92.58 (-0.22)} \\
\bottomrule
\end{tabular}
\label{imagenet acc}
\end{table}

To demonstrate the effectiveness of BinGrad and ORQ, we conduct experiments on the CIFAR-10/100 \cite{krizhevsky2009learning} and the ImageNet \cite{russakovsky2015imagenet} datasets. Layer-wise quantization and bucket-based quantization are two commonly used methods. The first one quantizes the gradient in each layer independently, while the latter one evenly divides the whole gradient into buckets of the same length $d$ and quantizes each bucket independently. In all our experiments the quantization is bucket-based for a fair comparison, but layer-wise quantization could also be applied. Here we are focused on the performance gain resulting from better quantization levels. Six methods are compared in this section:

\begin{itemize}
    \item TernGrad as proposed in \cite{wen2017terngrad} with 3 quantization levels.
    \item QSGD-s as proposed in \cite{alistarh2017qsgd} with $s$ ($\geq 3$) quantization levels. QSGD-3 is similar to TernGrad.
    \item Linear-s: A naive method choosing $s$ quantization levels by linearly dividing the gradient cumulative distribution. It has also been used to quantize model weights in \cite{han2015deep}.
    \item ORQ-s for multi-level ($s\geq 3$) quantization that we proposed in section \ref{orq}.
    \item BinGrad-pb/BinGrad-b for 2-level quantization that we proposed in section \ref{bingrad}.
    \item Scaled SignSGD.
\end{itemize}

TernGrad proposes a useful gradient clipping technique to be applied before quantizing the gradient: $clip(v)=sign(v)\cdot \min(|v|, c\cdot\sigma)$, where $\sigma^2$ is the gradient variance and $c$ is a positive constant (empirically set to 2.5). We also apply this in our BinGrad and ORQ methods with a linear warm-up schedule starting from base learning rate/10 for 5 epochs. We compare these methods both with and without the gradient clipping technique.

\subsection{CIFAR}

\subsubsection{Implementation details}

We implement all the methods in PyTorch \cite{paszke2019pytorch} and run the experiments on a cluster with Nvidia Tesla P40 GPUs. CIFAR-10/100 experiments are conducted in a single machine environment meaningful for computation and memory efficiency. ResNet-56, ResNet-110 and GoogLeNet models are tested. An SGD optimizer with a momentum constant of 0.9 is applied, the weight decay is set to be 0.0005 and the batch size is 128. We train each model for 200 epochs with a learning rate decay of 0.1 at epoch 100 and 150. The base learning rate is 0.1. Standard data augmentation techniques are incorporated. We did not apply gradient clipping for all methods in CIFAR experiments, because CIFAR is a comparatively simple dataset and we want the influence of different quantization schemes to be clearly reflected. The buckets size $d=2048$ unless stated otherwise.

\subsubsection{Quantization levels}

We first illustrate the different quantization strategies in Figure \ref{levels distribution}. As shown by the FP gradient, the distribution does not resemble Gaussian with sharp edges around zero. Evenly spaced quantization intervals as used in QSGD favor the FP gradient with uniform distribution. They are not a good alternative method of FP gradient due to two additional reasons:

\begin{enumerate}
    \item Low overall utilization of quantization levels. As shown in the first figure of Figure \ref{levels distribution}, very few gradients are quantized to levels away from zero. Such a problem will inevitably lead to a large quantization error because of the decreased gradient diversity and representability.
    \item The loss of gradient shape information. A considerable distortion can be observed in QSGD-9 compared with the FP gradient. An advanced quantization scheme should preserve the shape information as much as possible.
\end{enumerate}

In comparison, the naive Linear gradient quantization method tries to improve the utilization of levels other than zero. However, the quantized gradients of the Linear scheme are mostly centralized around zero, where the density of the gradient distribution is high. Although the Linear strategy tries to balance the number of values quantized to different levels, it lost most of the gradient shape information which may degrade the test performance. To achieve better performance with the quantized gradient, our proposed methods improve both these two criteria with a balance. Through the greedy determination of quantization levels with the optimal condition, our methods ORQ-9 as in the third figure of Figure \ref{levels distribution} not only maintains a much better gradient shape distortion but also improves the utilization of quantization levels other than zero.

For the ultra low-level quantization (2/3 levels), there is little space left for us to optimize the selection of the quantization levels. In our proposed BinGrad-b and BinGrad-pb, there is a trade-off between bias and variance when considering which method to choose. BinGrad-b achieves minimum variance/quantization error and is the first to consider in the single machine environment. However, BinGrad-pb achieves reduced bias with enlarged quantization range as shown in the fourth figure of Figure \ref{levels distribution} at the cost of larger quantization error. In distributed settings, the variance could be reduced by averaging the gradient from different workers.

\subsubsection{Performance}

To validate the analysis of quantization error, we record it during training with different methods as shown in Figure \ref{cifar training}. The quantization error of our proposed ORQ method always surpasses its counterparts of the same number of quantization levels. ORQ performs better especially when the quantized gradient uses fewer quantization levels. This gap also contributes to better convergence and test performance. In overall, the quantization error of BinGrad-b is less than BinGrad-pb. BinGrad achieves better test performance and smaller quantization error than 3-level quantization because itself essentially can be regarded as a clipping scheme that removes large gradient values. If to incorporate gradient clipping, the 3-level quantization methods can achieve better performance which is very close to FP gradients as shown in Table \ref{cifar10 clip factor bucket size 512}.

Regarding top-1 testing accuracy (Table \ref{cifar100 single worker test acc}), ORQ achieves an improvement ranging from 0.36\% to 2.85\% under the no clipping, relatively large batch size and single machine settings compared with its counterparts, while naive Linear methods can lead to even more degradation. A smaller bucket size can contribute to less gradient information loss as shown in Table \ref{ciar10 bucket size test acc}. It also shows that ORQ is more resilient to the increasing bucket size. With bucket size growing from 128 to 32768, the test accuracy of ORQ degrades 4.58\%, while its counterpart degrades 5.23\%. A larger bucket size is more desirable to decrease the cost of sending floating-point to represent quantization levels. But it is not a major factor affecting the compression ratio as the gain from quantizing each gradient element is more overwhelming.

\subsection{ImageNet}

\subsubsection{Implementation details}

We run all the ImageNet experiments with multiple workers to test the performance of unbiased schemes under distributed settings with reduced gradient precision. We use an SGD optimizer with a momentum constant of 0.9. The weight decay is 0.0001 and the batch size is 256 in total. A mini-batch is evenly split onto 4 workers. The statistical parameters (e.g., the statistics of the batch normalization layer) are broadcast from the first worker to the other workers during training. We train the model for 90 epochs with a learning rate decay of 0.1 at epoch 30 and epoch 60. The base learning rate is 0.1. Standard data augmentation techniques are incorporated. All the experiments use the bucket size $d$ of 512.

\subsubsection{Performance}

We show the metrics during the training process in Figure \ref{imagenet training} and the final test performance in Table \ref{imagenet acc}. ORQ-5 and ORQ-9 can achieve very close performance regarding training metrics and comparable performance regarding testing metrics compared with FP gradient. In terms of test accuracy and under the distributed settings, ORQ achieves an improvement of around 1.3\% and 0.8\% for top-1 and top-5 testing accuracy respectively compared with the counterpart. In particular, decreasing the compression ratio from 20.2 to 10.1 leads to a top-1 testing accuracy gain of 1.47\% for ORQ, but only 0.95\% for the counterpart. Moreover, ORQ-3 achieves very similar test performance compared with QSGD-5 and QSGD-9. Under the same test performance requirements, we can apply a more aggressive gradient quantization precision for lower communication overheads using ORQ.

\section{Conclusion}

In this paper, we deduced the optimal condition for communication-efficient distributed training of deep neural networks with the binary and multi-level compressed gradient. The optimal condition has no extra assumption on the gradient distribution, theoretically making it the most suitable scheme for quantizing the gradient of the neural network's any layer at any training stage. BinGrad-b/pb and ORQ are proposed based on the optimal quantization condition. Experimental results show the effectiveness of BinGrad and ORQ without gradient clipping on CIFAR-10/100 datasets in the single machine environment, and ORQ with gradient clipping on ImageNet in the distributed environment. With gradient clipping, ORQ achieves close performance compared with the FP gradient. In comparison, naive Linear methods may not help improve the performance. As future works, the greedy algorithm for determining the quantization levels in ORQ may be further improved.

\bibliographystyle{abbrv}  
\bibliography{main.bib}  

\end{document}